\newtheorem{theorem}{Theorem}
\newtheorem{definition}{Definition}
\newtheorem{corollary}{Corollary}
\newtheorem{proposition}{Proposition}
\newtheorem{problem}{Problem}
\renewcommand{\b}[1]{{{{\color{blue}#1}}}} 
\newcommand{\nn}{{\mathscr{N}\negthickspace\negthickspace\negthinspace\mathscr{N}}\negthinspace}
\newcommand{\overbar}[1]{\mkern 3.0mu\overline{\mkern-2.5mu#1\mkern-2.0mu}\mkern 2.0mu}
\renewcommand\footnotetextcopyrightpermission[1]{}
\title{AReN: Assured ReLU NN Architecture for Model Predictive Control of LTI Systems}
\author{James Ferlez}
\affiliation{
	\institution{University of California, Irvine}
	\department{Dept. of Electrical Engineering and Computer Science}
}
\email{jferlez@uci.edu}
\author{Yasser Shoukry}
\affiliation{
	\institution{University of California, Irvine}
	\department{Dept. of Electrical Engineering and Computer Science}
}
\email{yshoukry@uci.edu}
\begin{document}

\begin{abstract}
	In this paper, we consider the problem of automatically designing a 
	Rectified Linear Unit (ReLU) Neural Network (NN) architecture that is 
	sufficient to implement the optimal Model Predictive Control (MPC) strategy 
	for an LTI system with quadratic cost. Specifically, we propose AReN, an 
	algorithm to generate Assured ReLU Architectures. AReN takes as input an 
	LTI system with quadratic cost specification, and outputs a ReLU NN 
	architecture with the assurance that there exist network weights that 
	exactly implement the associated MPC controller. AReN thus offers new 
	insight into the design of ReLU NN architectures for the control of LTI 
	systems: instead of training a heuristically chosen NN architecture on data 
	-- or iterating over many architectures until a suitable one is found -- 
	AReN can suggest an adequate NN architecture before training begins. While 
	several previous works were inspired by the fact that both ReLU NN 
	controllers and optimal MPC controller are both Continuous, 
	Piecewise-Linear (CPWL) functions, exploiting this similarity to design NN 
	architectures with correctness guarantees has remained elusive. AReN 
	achieves this using two novel features. First, we reinterpret a recent 
	result about the implementation of CPWL functions via ReLU NNs to show that 
	a CPWL function may be implemented by a ReLU architecture that is 
	determined by the number of distinct affine regions in the function. 
	Second, we show that we can efficiently over-approximate the number of 
	affine regions in the optimal MPC controller without solving the MPC 
	problem exactly. Together, these results connect the MPC problem to a ReLU 
	NN implementation without explicitly solving the MPC and directly 
	translates this feature to a ReLU NN architecture that comes with the 
	assurance that it can implement the MPC controller. We show through 
	numerical results the effectiveness of AReN in designing an NN architecture.
\end{abstract}

\maketitle


\section{Introduction} 
\label{sec:introduction}
End-to-end learning is attractive for the realization of autonomous 
cyber-physical systems, thanks to the appeal of control systems based on a pure 
data-driven architecture. By taking advantage of the current advances in the 
field of reinforcement learning, several works in the literature showed how a 
well trained deep NN that is capable of controlling cyber-physical systems to 
achieve certain tasks~\cite{bojarski2016end}. Nevertheless, the current state-of-the-art 
practices of designing these deep NN-based controllers is based on heuristics 
and hand-picked hyper-parameters (e.g., number of layers, number of neurons per 
layer, training parameters, training algorithm) without an underlying theory 
that guides their design. In this paper, we focus on the fundamental question 
of how to systematically choose the NN architecture (number of layers and 
number of neurons per layer) such that we guarantee the correctness of the 
chosen NN architecture.

In this paper, we will confine our attention to the state-feedback Model 
Predictive Control (MPC) of a Linear Time-Invariant (LTI) system with quadratic 
cost and under input and output constraints (see Section 
\ref{sec:problem_formulation} for the specific MPC formulation). Importantly, 
this MPC control problem is known to have a solution that is Continuous and 
Piecewise-Linear (CPWL) \footnote{Although these functions are in fact 
continuous, piecewise-\emph{affine}, the literature on the subject refer to 
them as  piecewise ``linear'' functions, and hence we will conform to that 
standard.} in the current system state 
\cite{BemporadExplicitLinearQuadratic2002}. This property renders optimal MPC 
controllers compatible with a ReLU NN implementation, as any ReLU NN defines a 
CPWL function of its inputs. For this reason, several recent papers focus on 
how to approximate an optimal MPC controller using a ReLU 
NN~\cite{ChenApproximatingExplicitModel2018}.

However, unlike other work on the subject, AReN seeks to use knowledge of the 
underlying control problem to guide the design of \emph{data-trained NN 
controllers}. One of the outstanding problems with data-driven approaches is 
that the architecture for the NN is chosen either according to heuristics or 
else via a computationally expensive iteration scheme that involves adapting 
the architecture iteratively and re-training the NN. Besides being 
computationally taxing, neither of these provide any assurances that the 
resultant architecture is sufficient to adequately control the underlying 
system, either in terms of performance or stability. In the context of 
controlling an LTI system, then, AReN partially addresses these shortcomings: 
AReN is a computationally pragmatic algorithm that returns a ReLU NN 
architecture that is at least sufficient to implement the optimal MPC 
controller described before. That is given an LTI system with quadratic cost 
and input/output constraints, AReN determines a ReLU NN architecture -- both 
its structure and its size -- with the guarantee that \emph{there exists} an 
assignment of the weights such that the resultant NN \emph{exactly} implements 
the optimal MPC controller. This architecture can then be trained on data to 
obtain the final controller, only now with the assurance that the training 
algorithm \emph{can choose the optimal MPC controller} among all of the 
possible NN weight assignments available to it.

The algorithm we propose depends on two observations:
\begin{itemize}
	\item First, that any CPWL function may be translated into a ReLU NN 
		with an architecture determined wholly by the number of linear regions 
		in the function; this comes from a careful interpretation of the recent 
		results in \cite{AroraUnderstandingDeepNeural2016}, which are in turn 
		based on the hinging-hyperplane characterization of CPWL functions in 
		\cite{ShuningWangGeneralizationHingingHyperplanes2005} and the lattice 
		characterization of CPWL functions in 
		\cite{KahlertGeneralizedCanonicalPiecewiselinear1990}.

	\item Second, that there is a computationally efficient way to  
		over-approximate the number of linear regions in the optimal MPC 
		controller \emph{without solving for the optimal controller 
		explicitly}. This involves converting the state-region specification 
		equation for the optimal controller into a single, state-independent 
		collection of linear-inequality feasibility problems -- at the expense 
		of over-counting the number of affine regions that might be present in 
		the optimal MPC controller. This requires an algorithmic solution 
		rather than a closed form one, but the resultant algorithm is 
		computationally efficient enough to treat much larger problems than are 
		possible when the explicit optimal controller is sought.
\end{itemize}
Together these observations almost completely specify an algorithm that provides the architectural guidance we claim.

\noindent \textbf{Related work:} The idea of training neural networks to mimic 
the behavior of model predictive controllers can be traced back to the late 
1990s where neural networks trained to imitate MPC controllers were used to 
navigate autonomous robots in the presence of obstacles (see for 
example~\cite{ortega1996mobile}, and the references within) and to stabilize 
highly nonlinear systems~\cite{cavagnari1999neural}. With the recent advances 
in both the fields of NN and MPC, several recent works have explored the idea 
of imitating the behavior of MPC controllers ~\cite{hertneck2018learning, 
aakesson2006neural, pereira2018mpc, amos2018differentiable, 
claviere2019trajectory}.
The focus of all this work was to blindly mimic a base MPC controller without 
exploiting the internal structure of the MPC controller to design the NN 
structure systematically. The closest to our work are the results reported 
in~\cite{karg2018efficient,ChenApproximatingExplicitModel2018}. In this line of 
work, the authors were motivated by the fact that both explicit state MPC and 
ReLU NNs are CPWL functions, and they studied how to compare the performance of 
trained NN and explicit state MPC controllers. Different than the results 
reported in ~\cite{karg2018efficient,ChenApproximatingExplicitModel2018}, we 
focus, in this paper, on how to bridge the insights of explicit MPC to provide 
a systematic way to design a ReLU NN architecture with correctness guarantees.

Another related line of work is the problem of Automatic Machine Learning 
(AutoML) and in particular the problem of hyperparameter (number of layers, 
number of neurons per layer, and learning algorithm parameters) optimization 
and tuning in deep NN, in general, and in deep reinforcement learning, in 
particular (see for example~\cite{pedregosa2016hyperparameter, 
bergstra2012random, paul2019fast, baker2016designing, quanming2018taking} and 
the references within). In this line of work, an iterative and exhaustive 
search through a manually specified subset of the hyperparameter space is 
performed. Such a search procedure is typically followed by the evaluation of 
some performance metric that is used to select the best hyperparameters. Unlike 
the results reported in this line of work, AReN does not iterate over several 
designs to choose one. Instead, AReN directly generates an NN architecture that 
is guaranteed to control the underlying physical system adequately. 



\section{Problem Formulation} 
\label{sec:problem_formulation}

\subsection{Dynamical Model and Neural Network Controller} 
\label{sub:model_predictive_control}
We consider a discrete-time Linear, Time-Invariant (LTI) dynamical system 
of the form:
\begin{align}
	x(t+1) &= A x(t) + B u(t), \qquad y(t) = C x(t) \label{eq:dynamics}
\end{align}
where $x(t) \in \mathbb{R}^n$ is the state vector at time $t \in \mathbb{N}$, $u(t) \in 
\mathbb{R}^m$ is the control vector, and $y(t) \in \mathbb{R}^l$ is the output vector. The matrices $A$, $B$, and $C$ represents the system dynamics and the output map and have appropriate dimensions.
Furthermore, we consider controlling \eqref{eq:dynamics} with a state feedback 
neural network controller $\nn$:
\begin{equation}
	\nn: \mathbb{R}^{n} \rightarrow \mathbb{R}^m
\end{equation}
while fulfilling the constraints:
\begin{equation}
	y_{\text{min}} \le y(t) \le y_{\text{max}}, \qquad u_{\text{min}} \le u(t) \le u_{\text{max}}
\end{equation}
at all time instances $t \ge 0$ where $y_{\text{min}}, y_{\text{max}}, u_{\text{min}}$ and $u_{\text{max}}$ are constant vectors of appropriate dimension with $y_{\text{min}} < y_{\text{max}}$ and $u_{\text{min}} < u_{\text{max}}$ (where $<$ is taken element-wise).

In particular, we consider a ($K$-layer) Rectified Linear Unit Neural Network (ReLU NN)
that is specified by composing $K$ \emph{layer} functions (or just 
\emph{layers}). A layer with $\mathfrak{i}$ inputs and $\mathfrak{o}$ outputs 
is specified by a $(\mathfrak{o} \times \mathfrak{i} )$ real-valued matrix of 
\emph{weights}, $W$, and a $(\mathfrak{o} \times 1)$ real-valued matrix of 
\emph{biases}, $b$, as follows:
\begin{align}
	L_{\theta} : \mathbb{R}^{\mathfrak{i}} &\rightarrow \mathbb{R}^{\mathfrak{o}} \notag\\
	      z &\mapsto \max\{ W z + b, 0 \}
\end{align}
where the $\max$ function is taken element-wise, and $\theta \triangleq (W,b)$ 
for brevity. Thus, a $K$-layer ReLU NN function as above is specified by $K$ 
layer functions $\{L_{\theta^{(i)}} : i = 1, \dots, K\}$ whose input and output 
dimensions are \emph{composable}: that is they satisfy $\mathfrak{i}_{i} = 
\mathfrak{o}_{i-1}: i = 2, \dots, K$. Specifically:
\begin{equation}
	\nn(x) = (L_{\theta^{(K)}} \circ L_{\theta^{(K-1)}} \circ \dots \circ L_{\theta^{(1)}})(x).
\end{equation}
When we wish to make the dependence on parameters explicit, we will index a 
ReLU function $\nn$ by a \emph{list of matrices} $\Theta \triangleq ( 
\theta^{(1)}, \dots , \theta^{(K)} )$ \footnote{That is $\Theta$ is not the 
concatenation of the $\theta^{(i)}$ into a single large matrix, so it preserves 
information about the sizes of the constituent $\theta^{(i)}$.}. Also, it is 
common to allow the final layer function to omit the $\max$ function 
altogether, and we will be explicit about this when it is the case.

Note that specifying the number of layers and the \emph{dimensions} of the 
associated matrices $\theta^{(i)} = (\; W^{(i)}, b^{(i)}\; )$ specifies the 
\emph{architecture} of the ReLU NN. Therefore, we will use:
\begin{equation}
	\text{Arch}(\Theta) \triangleq ( (n,\mathfrak{o}_{1}), (\mathfrak{i}_{2},\mathfrak{o}_{2}), \ldots, (\mathfrak{i}_{K-1},\mathfrak{o}_{K-1}), (\mathfrak{i}_{K}, m))
\end{equation}
to denote the architecture of the ReLU NN $\nn_{\Theta}$. Note that our 
definition is general enough since it allows the layers to be of different 
sizes, as long as $\mathfrak{o}_{i-1} = \mathfrak{i}_{i}$ for $i = 2, \dots, K$.

\subsection{Neural Network Architecture Specification}
We are interested in finding an architecture $\text{Arch}(\Theta)$ for the 
$\nn_{\Theta}$ such that it is guaranteed to have enough parameters to exactly 
mimic the input-output behavior of some base controller 
$\mu:\mathbb{R}^n\rightarrow\mathbb{R}^m$. Due to the popularity of using model 
predictive control schemes as a base 
controller~\cite{ortega1996mobile,cavagnari1999neural,hertneck2018learning,aakesson2006neural,pereira2018mpc,amos2018differentiable, 
claviere2019trajectory}, we consider finite-horizon roll-out Model Predictive 
Control (MPC) scheme as the base controller that the ReLU NN is trying to mimic 
its behavior.

Finite-horizon roll-out MPC maps the current state, $x(t)$, to the first control input obtained from the solution to an optimal control problem over a finite time horizon $N_y$ with the first $N_u$ control actions chosen open-loop and the remaining $N_y - N_u$ control actions determined by an a-priori-specified constant-gain state feedback. Since this control scheme involves solving an optimal control problem at each time $t$ (with initial state $x(t)$), we will use the notation $x_{t^\prime|t}$ to denote the ``predicted state'' at time $t^\prime > t$ from the initial state $x(t)$ supplied to the MPC controller (the same notation as in \cite{BemporadExplicitLinearQuadratic2002}).
In particular, for fixed matrices $P,Q \geq 0$, $R > 0$ and $K$, we define the 
cost function: 
\begin{multline}
	\label{eq:cost_fn}
	J(U,x(t)) \triangleq x^{\text{T}}_{t+N_y|t}\; P \; x_{t+N_y|t} \\
		+ \sum_{k = 0}^{N_y-1} \left[
			x^{\text{T}}_{t+k|t} \; Q \; x_{t+k|t} +
			u^{\text{T}}_{t+k} \; R \; u_{t+k|t}
		\right]
\end{multline}
as a function of an $(m \times N_c+1)$ control variables matrix: 
\begin{equation}
	U \triangleq \left[ {u_{t}}^\text{T}; {u_{t+1}}^{\text{T}}; \; \dots; \; 
{u_{t+N_c}}^\text{T} \right]^{\text{T}}.
\end{equation}
Then the MPC control law is specified as:
\begin{equation}
\label{eq:mpc_controller}
	\mu_\text{MPC} : x(t) \mapsto u^*_t
\end{equation}
where
\begin{equation}
	\left[ {u^*_{t}}^\text{T}; {u^*_{t+1}}^{\text{T}}; \; \dots; \; 
{u^*_{t+N_c}}^\text{T} \right]^{\text{T}} = \arg \min_{U} J(U,x(t))
\end{equation}
subject to the constraints:
\begin{align}
	&y_{\text{min}} \leq y_{t+k|t} \leq y_{\text{max}}	&k&= 1, \dots, N_c 		&~
														\label{eq:first_constraint} \\
	&u_{\text{min}} \leq u_{t+k} \leq u_{\text{max}}	&k&=0, 1, \dots, N_c 	&~ \\
	&x_{t|t} = x(t)										&~& 					&~ \\
	&x_{t+k+1|t} = A x_{t+k|t} + B u_{t+k}				&k&\geq 0 				&~
													  \label{eq:dynamics_constraint}\\
	&y_{t+k|t} = C x_{t+k|t}							&~& 					&~ \\
	&u_{t+k} = K x_{t+k|t}								&N&_u \leq k < N_y. 		&~  \label{eq:last_constraint}
\end{align}

The matrix $P$ is typically chosen to reflect the quadratic cost-to-go (using 
matrices $Q$ and $R$) resulting from the feedback control $K$ applied from 
time-step $t + N_y$ onwards (i.e. $P$ is the solution to the appropriate 
algebraic Ricatti equation). We will henceforth consider only this scenario, 
since this is the most common one; furthermore, since it doesn't benefit from 
$N_y >> N_c$, we will henceforth assume that $N_y-1 = N_c$. For future 
reference, this problem then has:
\begin{align}
	\omega &\triangleq m \cdot (N_c + 1) &\text{\emph{decision variables}; and} \label{eq:omega}\\
	\rho &\triangleq 2 \cdot l \cdot N_c + 2 \cdot m \cdot (N_c+1) &\text{\emph{inequality constraints}. \label{eq:rho}}
\end{align}

\subsection{Main Problem} 
\label{sub:main_problem}

We are now in a position to state the problem that we will consider in this 
paper.
\begin{problem}
\label{prob:main_problem}
	Given system matrices $A$, $B$ and $C$ (as in \eqref{eq:dynamics}); 
	performance matrices (cost function matrices) $P, Q \geq 0$, $R > 0$ (as in 
	\eqref{eq:cost_fn}); constant-gain feedback matrix $K$ (as in 
	\eqref{eq:last_constraint}); and integer horizon $N_c > 1$, choose a ReLU 
	NN \emph{architecture} $\text{Arch}(\Theta)$, such that there exists a 
	real-value assignment for the elements in $\Theta$ that renders:
	\begin{equation} 
			\nn_{\Theta} (x) = \mu_\text{MPC}(x)
	\end{equation}
	for all $x$ in some compact subset of $\mathbb{R}^n$.
\end{problem}



\section{Framework} 
\label{sec:framework}
As we have noted before, it is known that $\mu_\text{MPC}$ is a CPWL function 
\cite{BemporadExplicitLinearQuadratic2002}. However, CPWL functions are usually 
specified using both linear functions \emph{and} regions as in this example:
\begin{align}
f(x) = 
	\begin{cases}
		2x + 3& x > 0 \\
		-2x + 3& x \le 0 	
	\end{cases}.
	\label{eq:CPWL_example}
\end{align}
This specification is difficult to implement using ReLU NNs, though, because 
the structure of a ReLU neural networks \emph{intertwines} the implementation 
of the linear functions and their active regions.

Fortunately, there are several representations of CPWL functions that avoid the 
explicit specification of regions by ``encoding'' them into the composition of 
nonlinear functions with linear ones. Recent work \cite[Theorem 
2.1]{AroraUnderstandingDeepNeural2016} considered one such representation based 
on hinging hyperplanes \cite{ShuningWangGeneralizationHingingHyperplanes2005}, 
and showed that this representation can be translated easily into a ReLU neural 
network implementation, \emph{whenever the CPWL function is known explicitly}. 

Given the computational cost of computing $\mu_\text{MPC}$ explicitly, the 
chief difficulty in Problem \ref{prob:main_problem} thus lies in inferring the 
neural network architecture $\text{Arch}(\Theta)$ without access to the 
explicit MPC controller $\mu_\text{MPC}$. Unfortunately, the hinging hyperplane 
representation employed in \cite[Theorem 2.1]{AroraUnderstandingDeepNeural2016} 
cannot be easily used in this circumstance (for more about why this particular 
implementation is unsuitable when $\mu_\text{MPC}$ is not explicitly known, see 
also Section \ref{sec:discussion}.)

However, every CPWL function also has a \emph{(two-level) lattice 
representation} \cite{TarelaRegionConfigurationsRealizability1999} 
\footnote{The lattice representation is in fact an intermediary representation 
used to construct the hinging hyperplane representation; see 
\cite{ShuningWangGeneralizationHingingHyperplanes2005}.}: unlike the particular 
hinging hyperplane representation mentioned above, we will show that the 
lattice representation \emph{can} be used to solve Problem 
\ref{prob:main_problem} without explicitly solving for $\mu_\text{MPC}$. In 
particular, the lattice representation of a CPWL function has two properties 
that facilitate this:
\begin{enumerate}
	\item It has a structure that is amenable to implementation with a ReLU 
		NN (by mechanisms similar to those used in 
		\cite{AroraUnderstandingDeepNeural2016}); and

	\item It is described purely in terms of the \emph{local linear 
		functions} and the number of \emph{unique-order regions} (exact 
		definitions of these terms are given in the next subsection) in the 
		CPWL function, both of which we can efficiently over-approximate for 
		$\mu_\text{MPC}$.
\end{enumerate}
Thus, a description of the lattice representation largely explains how to solve 
Problem \ref{prob:main_problem}; we follow this discussion by connecting it to 
a top-level description of our algorithm.

\subsection{The Two-Level Lattice Representation of a CPWL Function} 
\label{sub:the_lattice_representation_of_a_cpwl_function}
To understand the lattice representation of a CPWL, we first need the following 
definition. Throughout this subsection we will assume that $f : \mathbb{R}^n 
\rightarrow \mathbb{R}$ is a CPWL function. All the subsequent discussion can 
be generalized directly to the case when $f : \mathbb{R}^n \rightarrow 
\mathbb{R}^m$.
\begin{definition}[Local Linear Function]
	\label{def:local_linear_function}
	Let $f : \mathbb{R}^n \rightarrow \mathbb{R}$ be a CPWL function. Then 
	$\ell$ is a \textbf{local linear function} of $f$ if there exists an open 
	set $U \subset D \subset \mathbb{R}^n$ such that for all $x \in U$:
	\begin{equation}
		f(x) = \ell(x).
	\end{equation}
	The set of all local linear functions will be denoted $\mathcal{R} = 
	\{\ell_1, \ell_2, \dots, \ell_N\}$.
\end{definition}
\noindent The CPWL function in \eqref{eq:CPWL_example} consists of the two 
local linear functions $\ell_1(x) = 2x+3$ and $\ell_2(x) = -2x+3$, for example.
 
The lattice representation is based on the following idea: Consider the set of 
\emph{distinct} local linear functions of $f$ namely $\{(1,\ell_1(x)), $ 
$\dots, (N,\ell_N(x))\}$ along with the natural projections of this set $\pi_1 
: (i,\ell_i(x)) \in \mathbb{N} \times \mathbb{R} \mapsto i$ and $\pi_2 : 
(i,\ell_i(x)) \in \mathbb{N} \times \mathbb{R} \mapsto \ell_i(x)$. It follows 
from the fact that the $f$ is \emph{continuous} PWL function that at least two 
local linear functions intersect for each $x$ on the \emph{boundary} between 
linear regions. Therefore, the ordering of the set $\{(1,\ell_1(x)), \dots, 
(N,\ell_N(x))\}$ by $\ge$ on the projection $\pi_2$ induces at least two 
different orderings of the projection $\pi_1$ (see 
Figure~\ref{fig:order_region_illustration} for an example). It is a profound 
observation nevertheless, because it means that the relative ordering of the 
values $\{\ell_1(x), \dots, \ell_N(x)\}$ can be used to decide which of the 
local linear function is ``active'' at a particular $x$. This is illustrated in 
Figure \ref{fig:order_region_illustration}; see also a similar figure in 
\cite[Figure 1]{TarelaRegionConfigurationsRealizability1999}.
\begin{figure}
	\centering
	\includegraphics[width=0.4\textwidth]{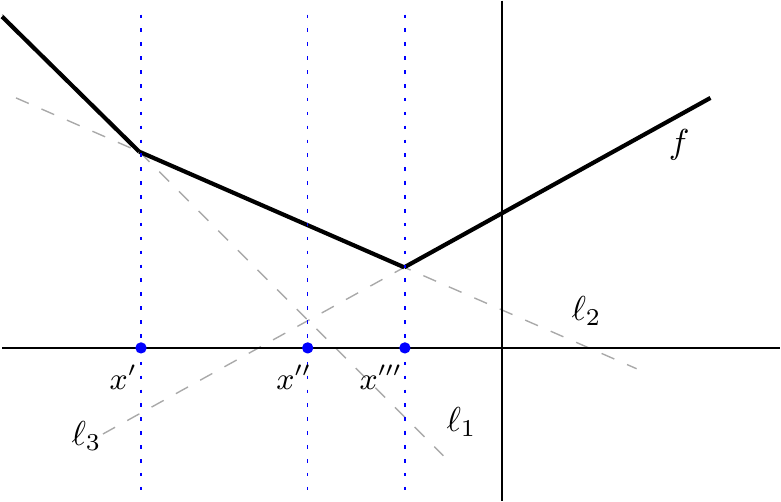}
	\caption{Ordering of local linear functions changes at the boundary between linear regions: $f$ is a CPWL function with local linear functions $\ell_1$, $\ell_2$ and $\ell_3$. Note that $\ell_1(x^\prime) \geq \ell_2(x^\prime) \geq \ell_3(x^\prime)$ and $\ell_2(x^\prime) \geq \ell_1(x^\prime) \geq \ell_3(x^\prime)$ are two different orderings at the boundary point $x'$. Also note that the ordering can change \emph{within} a linear region: c.f. $x^{\prime\prime}$. See also \cite[Figure 1]{TarelaRegionConfigurationsRealizability1999}.}
	\label{fig:order_region_illustration}
\end{figure}
This also suggests that we make the following definition, which allows us to 
talk about regions in the domain of $f$ over which the order of the local 
linear functions is the same.
\begin{definition}[Unique-Order Region (rephrasing of {\cite[Definition 2.3]{TarelaRegionConfigurationsRealizability1999}})]
	\label{def:order_region}
	Let $f : D \subset \mathbb{R}^n \rightarrow \mathbb{R}$ be a CPWL function 
	with $N$ distinct local linear functions $\mathcal{R} = \{\ell_1, \dots 
	\ell_N\}$; that is for all $x \in D$, $f(x) = \ell_i(x)$ for some $\ell_i 
	\in \mathcal{R}$. Then a \textbf{unique-order region} of $f$ is a region $O 
	\subseteq D$ from the hyperplane arrangement in $\mathbb{R}^{n}$ defined by 
	those hyperplanes $H_{ij} = \{ x : \ell_i(x) = \ell_j(x) \}$ that are 
	non-empty. In particular, for all $x$ in a unique-order region $O$, 
	$\ell_{i_1}(x) \geq \ell_{i_2}(x) \geq \dots \geq \ell_{i_N}(x)$ for some 
	permutation of $i_k$ of $\{1, \dots, N\}$.
\end{definition}

We are now in a position to describe the two-level lattice representation of a 
CPWL function. 
\begin{theorem}[Two-Level Lattice Forms From Unique-Order Regions {\cite[Theorem 4.1]{TarelaRegionConfigurationsRealizability1999}}]
\label{thm:lattice_form}
	Let $f$ be as in Definition \ref{def:order_region} with $M$ the number of 
	unique-order regions of $f$ in $D$. Then there exists at most $M$ subsets 
	$s_i \subseteq \{ 1, \dots, N \}$, $i = 1, \dots, M$ such that:
	\begin{equation}
		\label{eq:lattice_form}
		f(x) = \max_{1 \leq i \leq M} \min_{j \in s_i} \ell_{j}(x) \quad \forall x \in D.
	\end{equation}
\end{theorem}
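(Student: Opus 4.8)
The plan is to establish the identity $f(x) = \max_{1 \leq i \leq M} \min_{j \in s_i} \ell_j(x)$ by constructing the subsets $s_i$ explicitly from the unique-order regions and then verifying the identity pointwise. First I would fix an enumeration $O_1, \dots, O_M$ of the unique-order regions of $f$ in $D$. For each region $O_i$, pick a representative point $x_i$ in the (relative) interior of $O_i$; by Definition \ref{def:order_region} there is a permutation so that $\ell_{k_1}(x_i) \geq \ell_{k_2}(x_i) \geq \dots \geq \ell_{k_N}(x_i)$, and since $f$ agrees with one of its local linear functions on an open subset of $O_i$, that active function — call it $\ell_{\sigma(i)}$ — must be the \emph{maximal} one in this order, i.e. $f(x) = \max_{1\le j\le N}\ell_j(x) = \ell_{\sigma(i)}(x)$ for all $x \in O_i$. (This uses continuity of $f$ together with the fact that the order of the $\ell_j$ is constant on $O_i$, so the pointwise maximum of the $\ell_j$ is itself affine on $O_i$ and equals $f$ there; I would need a short argument that the active linear piece cannot be strictly below the max anywhere in $O_i$, which follows because $f$ is affine on $O_i$, equals some $\ell_j$ on an open set, and the $\ell_j$ are totally ordered on $O_i$.) Then define $s_i \triangleq \{\, j : \ell_j(x_i) \geq \ell_{\sigma(i)}(x_i) \text{ is false}\,\}^{c}$ — more cleanly, $s_i \triangleq \{\, j \in \{1,\dots,N\} : \ell_{\sigma(i)}(x) \leq \ell_j(x) \ \forall x \in O_i \,\}$, the set of local linear functions that dominate the active one throughout $O_i$. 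Note $\sigma(i) \in s_i$ and $\min_{j \in s_i} \ell_j \equiv \ell_{\sigma(i)}$ on $O_i$.

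The verification then splits into two inequalities. For the ``$\geq$'' direction: fix any $x \in D$, lying in some region $O_i$; then $\min_{j \in s_i}\ell_j(x) = \ell_{\sigma(i)}(x) = f(x)$, so the outer $\max$ is at least $f(x)$. For the ``$\leq$'' direction I must show that for \emph{every} index $i'$ and every $x \in D$, $\min_{j \in s_{i'}} \ell_j(x) \leq f(x)$. Suppose $x \in O_i$ (so $f(x) = \ell_{\sigma(i)}(x)$, the maximal $\ell$ at $x$); I claim $s_{i'}$ contains some index $j$ with $\ell_j(x) \leq \ell_{\sigma(i)}(x)$, which gives $\min_{j\in s_{i'}}\ell_j(x) \le \ell_{\sigma(i)}(x) = f(x)$. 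Indeed, $\sigma(i') \in s_{i'}$, and $\ell_{\sigma(i')}(x) \leq \max_{1\le k\le N}\ell_k(x) = f(x)$ because $\sigma(i')$ is a genuine local linear function and $f$ is the upper envelope of its local linear functions at every point — this last fact is the crux and is where continuity of $f$ is essential: on the closure of any region $f$ coincides with the max of the $\ell$'s since $f$ is continuous and equals that max on the interior.

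The step I expect to be the main obstacle is precisely the claim that $f$ equals the pointwise maximum of its local linear functions, i.e. $f(x) = \max_{1 \le k \le N}\ell_k(x)$ for all $x \in D$. This is \emph{not} true for arbitrary CPWL functions (a "tent" pointing downward is a min, not a max), so the theorem as I've sketched it cannot be literally correct without reinterpretation — the resolution is that the max-min (lattice) form is what makes it work: one does not claim $f$ is globally a max of affine functions, but rather that on each region $O_i$ the active piece $\ell_{\sigma(i)}$ is recovered as $\min_{j \in s_i}\ell_j$, and taking the max over $i$ never overshoots because on region $O_i$ every inner term $\min_{j\in s_{i'}}\ell_j(x)$ is bounded above by $f(x)$. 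So the real work is proving: (a) for each $i$, $\min_{j \in s_i}\ell_j \equiv f$ on $O_i$ (the ``achievability'' half, needing that $\ell_{\sigma(i)}$ dominates exactly the right set), and (b) for all $i', i$ and $x \in O_i$, $\min_{j\in s_{i'}}\ell_j(x) \le f(x)$ (the ``no-overshoot'' half). Part (b) I would handle by a continuity/boundary argument: pick $j \in s_{i'}$; if $\ell_j(x) > f(x)$ for some $x \in O_i$ then by continuity and affineness $\ell_j > \ell_{\sigma(i')}$ strictly on a neighborhood, but walking from $x_{i'}$ (where $\ell_j \geq \ell_{\sigma(i')}$ by definition of $s_{i'}$) toward $x$ through a chain of regions and using that $f$ is continuous and locally an envelope-consistent selection, one derives a contradiction with $j \in s_{i'}$; making this walk rigorous — essentially showing the order relations defining $s_{i'}$ propagate correctly — is the technical heart of the argument, and I would lean on \cite[Theorem 4.1]{TarelaRegionConfigurationsRealizability1999} for the detailed combinatorics rather than reprove it from scratch.
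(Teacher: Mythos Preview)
The paper does not prove this theorem; it is quoted directly from \cite[Theorem 4.1]{TarelaRegionConfigurationsRealizability1999}, so there is no in-paper proof to compare against. Your choice of $s_i$ (the indices $j$ with $\ell_j\ge\ell_{\sigma(i)}$ on $O_i$, where $\ell_{\sigma(i)}$ is the active piece on $O_i$) is the standard construction, and your ``$\geq$'' half is correct as written.

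The genuine gap is in the other half, and it originates in a false claim in your first paragraph: that on each unique-order region the active local linear function is the \emph{maximal} one, i.e.\ $f(x)=\max_{j}\ell_j(x)$ there. This fails for any non-convex CPWL function --- take $f(x)=-|x|$ with $\ell_1(x)=x$, $\ell_2(x)=-x$: on $(0,\infty)$ the active piece is $\ell_2$, the \emph{minimum}. You notice this yourself later, but by then you have already used it in your ``$\leq$'' argument via ``$\ell_{\sigma(i')}(x)\le \max_k\ell_k(x)=f(x)$''. Once that equality is discarded nothing remains: it is not true in general that $\ell_{\sigma(i')}(x)\le f(x)$ for $x\notin O_{i'}$ (same example: $\ell_1(1)=1>-1=f(1)$), nor is it true that $\sigma(i)\in s_{i'}$ for all $i,i'$. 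The real content of the ``no-overshoot'' direction is that for every $i'$ and every $x$ there exists \emph{some} $j\in s_{i'}$ with $\ell_j(x)\le f(x)$, and this needs the region-to-region continuity argument you only gesture at. Your sketched contradiction (``$\ell_j>\ell_{\sigma(i')}$ strictly on a neighborhood, contradicting $j\in s_{i'}$'') does not work either, since membership in $s_{i'}$ constrains $\ell_j$ only on $O_{i'}$, not near $x$. If you intend to defer this step to \cite{TarelaRegionConfigurationsRealizability1999} anyway, at minimum delete the incorrect ``active $=$ maximal'' assertion from your setup, since none of the correct parts of your argument depend on it.
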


\subsection{Structure of the Main Algorithm} 
\label{sub:main_algorithm}
Having described in detail the lattice representation of a CPWL, we return to 
the specific claims we made about how it structures our solution to Problem 
\ref{prob:main_problem}.

We first note that the form \eqref{eq:lattice_form} is well suited to 
implementation with a ReLU neural network: it is comprised of linear functions and 
$\max$/$\min$ functions, so many observations from 
\cite{AroraUnderstandingDeepNeural2016} apply to \eqref{eq:lattice_form} as 
well. In particular, the two-argument maximum function can be implemented 
directly with a ReLU using the well-known identity 
\begin{equation}
	\label{eq:max_identity}
	\max \{a,b\} = \frac{a+b}{2} + \frac{|a-b|}{2}
\end{equation}
and the following ReLU implementations of its constituent expressions 
\cite{AroraUnderstandingDeepNeural2016}:
\begin{align}
	|x| &= \max\{x,0\} + \max \{-x,0\} 
		\label{eq:abs_as_relu} \\
	x &= \max\{x,0\} - \max\{-x,0\}
		\label{eq:id_as_relu}.
\end{align}
Thus $\max$ can be implemented by a NN $\nn_{\Theta_{\max}}$ where:
\begin{equation}
	\Theta_{\max} = (
	\left[
		\begin{smallmatrix}
			1 & 1 \\
			-1 & -1 \\
			-1 & 1 \\
			1 & -1
		\end{smallmatrix}
	\right],
	\left[
		\begin{smallmatrix}
			\frac{1}{2} & -\frac{1}{2} & \frac{1}{2} & \frac{1}{2}
		\end{smallmatrix}
	\right]
	)
\end{equation}
This implementation is illustrated in Figure \ref{fig:relu_max_network}. Using 
the $\min$ variant of the identity \eqref{eq:max_identity}, namely:
\begin{equation}
	\min \{a,b\} = \frac{a+b}{2} - \frac{|a-b|}{2}
\end{equation}
leads to a similar ReLU implementation of the two-argument minimum function. In 
the previous notation, the \emph{architectures} of these $\max$ and $\min$ 
networks are the same, i.e. $\text{Arch}(\Theta_{\max})$ = 
$\text{Arch}(\Theta_{\min})$ = $( (2,4), (4,1) )$ with no activation function 
on the last layer.
\begin{figure}
	\centering
	\includegraphics[width=0.45\textwidth]{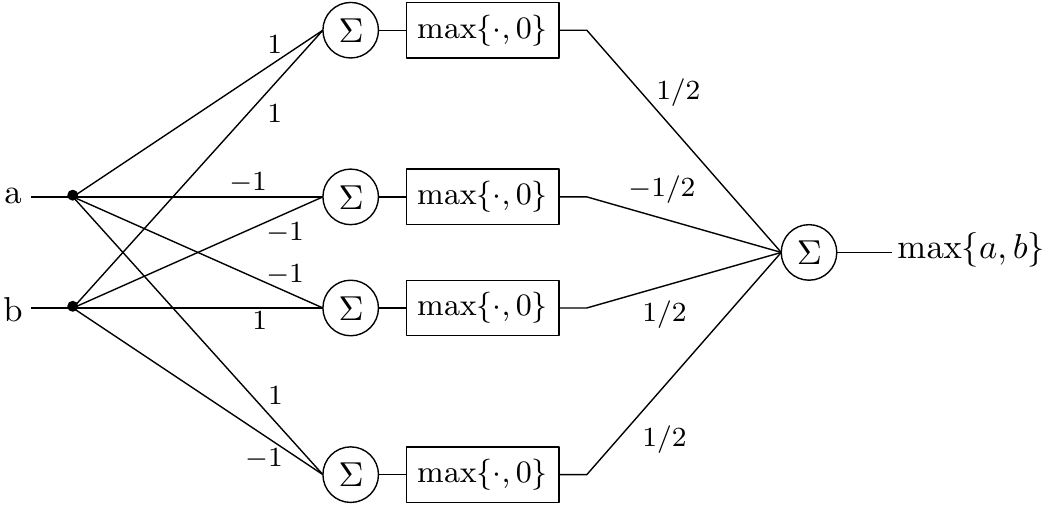}
	\caption{Illustration of a ReLU network to compute the maximum of two real numbers $a$ and $b$. See also \cite{AroraUnderstandingDeepNeural2016}.}
	\label{fig:relu_max_network} \vspace{-4mm}
\end{figure}

This implementation further suggests a natural way to implement the 
multi-element $\max$ (resp. $\min$) operation with a ReLU network 
\cite{AroraUnderstandingDeepNeural2016}. Such an operation can be implemented 
by deploying the two-element $\max$ (resp. $\min$) networks in a 
``divide-and-conquer'' fashion: the elements of the set to be maximized (resp. 
minimized) are fed pairwise into a \emph{layer} of two-element $\max$ (resp. 
$\min$) networks; the output of that first $\max$ (resp. $\min$) layer is fed 
pairwise into a \emph{subsequent} layer of two-element $\max$ (resp. $\min$) 
networks, and so on and so forth until there is only one output. Note that this 
approach can also be used on sets whose cardinality is not a power of two while 
maintaining a ReLU structure of the neural network $\nn$: the same value can  
be directed to multiple inputs as necessary. This structure is illustrated in 
Figure \ref{fig:relu_multi_max_network} for a network that computes the maximum 
of five real-valued inputs. Following this example, an $N$-input $\max$ (or 
$\min$) network $\max_N$ (resp. $\min_N$) is represented by a parameter list 
$\Theta_{{\max}_N}$ (resp. $\Theta_{{\min}_N}$) which has architecture:
\begin{figure*}
	\centering 
	\includegraphics[width=0.8\textwidth]{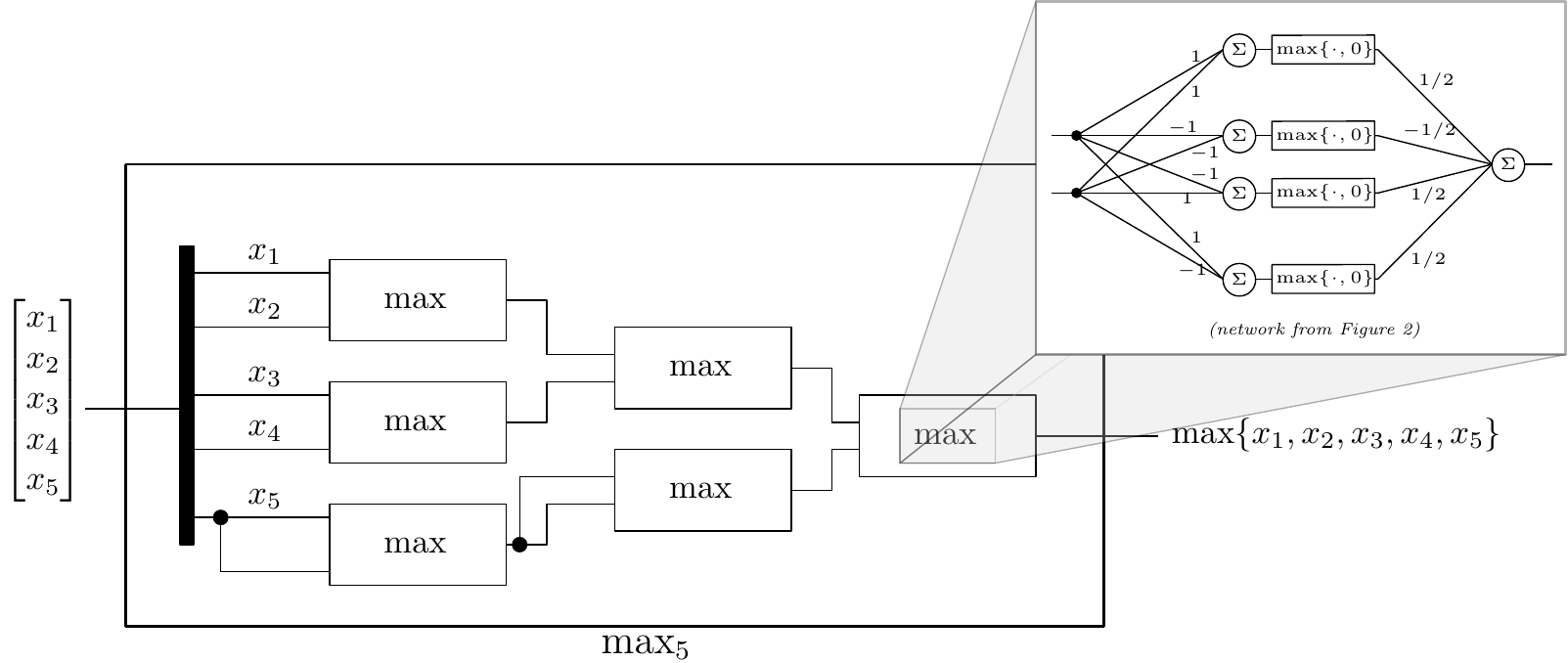} 
	\caption{Illustration of a ReLU network to compute the maximum of five real 
	numbers $\{x_1,x_2,x_3,x_4,x_5\}$. Callout depicts the network from Figure 
	\ref{fig:relu_max_network}. See also 
	\cite{AroraUnderstandingDeepNeural2016}.}\vspace{-3mm}
	\label{fig:relu_multi_max_network}
\end{figure*}
\begin{multline}
	\text{Arch}(\Theta_{{\max}_N}) = \text{Arch}(\Theta_{{\min}_N}) = \\
	\Big( \;
		\big(N, 2 \negthinspace\cdot\negthinspace \lceil  N/2 \rceil \big)\;, \;
		\lceil  N/2 \rceil \negthinspace\cdot\negthinspace \textnormal{Arch}( \Theta_{\max} ) \;, \; \hspace{45pt} \\
		(\lceil  N/2 \rceil - 1 ) \negthinspace\cdot\negthinspace \textnormal{Arch}( \Theta_{\max} ) \;, \;
		\dots \; , \;
		\textnormal{Arch}(\Theta_{\max}) \;
	\Big)
\end{multline}
where $c \cdot \text{Arch}(\Theta_{\max})$ means multiply every element in 
$\textnormal{Arch}(\Theta_{\max})$ by $c$; nested lists are ``flattened'' as 
appropriate; and there is no activation function on the final layer.

Now, given these multi-element $\max$/$\min$ networks, the remaining structure 
for a ReLU network implementation of the lattice form \eqref{eq:lattice_form} is clear: 
we need a neural network architecture capable of (i) implementing $f$'s local 
linear functions $\mathcal{R} = \{\ell_1, \dots, \ell_N\}$ and (ii) handling the 
selection of the subsets $s_i$. The implementation of the local linear 
functions is straightforward using a fully connected hidden layer. The 
selection can be handled by routing -- and replicating, as needed -- the output 
of those linear functions to a $\min$ network.  Since we do not know the exact 
size of the subsets $s_i$ in~\eqref{eq:lattice_form}, and hence the number of 
input ports for each $\min$ network, we must use $\min$ networks with as many 
pair-wise $\min$ input ports as there are local linear functions. Then, for 
subsets $s_i$ of size less than $N$, the architecture replicates some local 
functions multiple times to different input ports of the same $\min$ network to 
achieve the correct output. As discussed before, such replication does not 
affect the correctness of the architecture. Moreover, there will be one such 
$\min$ network for each unique-order region for a total of $M$.  This 
replicating and routing of signals can be accomplished by an auxiliary fully 
connected linear layer with $N$ inputs and $M\cdot N$ outputs. Since the 
purpose of this layer is to allow the weights to only select subsets of the 
local linear functions, this layer should have the property that all of the 
weights are either zero or one, and each output of the layer should select 
\emph{exactly} one input. That is the weight matrix for this layer should have 
a \emph{exactly} one $1$ in each row with all of the other weights set to $0$. 
For a real-valued CPWL function $f: \mathbb{R}^n \rightarrow \mathbb{R}$, this 
overall architecture is depicted in Figure \ref{fig:overall_architecture}. The 
selection and routing layer is depicted in red. The notation 
$\mathcal{R}_{\hat{s}_i}(x)$ reflects the routing and (one possible) 
replication of values of the local linear functions, and it is defined as 
follows:
\begin{multline}
\label{eq:redundant_replication}
	\mathcal{R}_{\hat{s}_i}(x) \triangleq \Big\{ \big( \{j\}, \ell_j(x) \big) : j \in s_i  \Big\} \\
	\cup \Big\{ \big( \{N+k\}, \ell_{\max s_i}(x) \big) : k = 1, \dots, N-|s_i| \Big\}.
\end{multline}
(That is $\mathcal{R}_{\hat{s}_i}(x)$ contains one copy of each of $\ell_j(x) : 
j \in s_i$, and as many additional copies of $\ell_{\max s_i}(x)$ as necessary 
to have $N$ total elements. In particular, $\max_{p \in 
\mathcal{R}_{\hat{s}_i}(x)} \pi_2 (p) = \max_{j \in s_i} \ell_j(x)$.)

Finally, we note that it is straightforward to extend this architecture to 
vector-valued functions like $\mu_\text{MPC}$. The structure of $\nn$ (Section 
\ref{sub:model_predictive_control}) means that a \emph{scalar} pairwise $\min$ 
(or $\max$) network can trivially compute the \emph{element-wise} minimum 
between two input vectors by simply allowing more inputs and applying the 
weights from Figure \ref{fig:relu_max_network} in an element-wise (diagonal) 
fashion. The result is an architecture that looks exactly like the one in 
Figure \ref{fig:overall_architecture}, only with the number of outputs $m$ 
multiplying the size of most signals.

The structure of the above described ReLU implementation is general enough to 
implement any CPWL function $f$ with $N$ local linear functions and $M$ unique 
order regions. We state this as a theorem below.
\begin{theorem}
\label{thm:nn_representation}
	Let $f : \mathbb{R}^n \rightarrow \mathbb{R}$ by CPWL with distinct local 
	linear functions $\mathcal{R} = \{\ell_1, \dots, \ell_N\}$ and $M$ unique 
	order regions. Then there is a parameter list $\Theta_{N,M}$ with: 
	\begin{multline}
	\label{eq:nn_representation}
		\textnormal{Arch}(\Theta_{N,M}) = \\
		\Big( \;
			\underbrace{(n, N\cdot M)}_{\text{linear layer}}, \;
			\underbrace{M\cdot\textnormal{Arch}(\Theta_{\min_N})}_{\min\text{ layer}} \;, \;
			\underbrace{\textnormal{Arch}(\Theta_{\max_M})}_{\max\text{ layer}}
		\; \Big)
	\end{multline}
	such that there exist an assignments for $\Theta_{N,M}$ that renders:
	$$\nn_{\Theta_{N,M}}(x) = f(x) \qquad \forall x \in \mathbb{R}^n,$$
	where $M\cdot\textnormal{Arch}(\Theta_{\min_N})$ means multiply every 
	element in $\textnormal{Arch}(\Theta_{\min_N})$ by $M$, and where nested 
	lists are ``flattened'' as needed. The final layers of the $\min$ layer and 
	the $\max$ layer lack activation functions.
\end{theorem}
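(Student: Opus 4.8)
\noindent The plan is to treat Theorem~\ref{thm:nn_representation} as an assembly result: the mathematical substance is already supplied by Theorem~\ref{thm:lattice_form} (the two-level lattice form) together with the ReLU realizations of $\max$/$\min$ recalled in \eqref{eq:max_identity}--\eqref{eq:id_as_relu} and the multi-element $\max_N$/$\min_N$ networks built from them, so all that remains is to (i)~exhibit a concrete weight assignment $\Theta_{N,M}$ realizing $f$ and (ii)~verify that its architecture is exactly \eqref{eq:nn_representation}. I would not attempt anything subtler than the direct construction pictured in Figure~\ref{fig:overall_architecture}.

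First I would invoke Theorem~\ref{thm:lattice_form}: since $f$ has $N$ distinct local linear functions $\mathcal{R}=\{\ell_1,\dots,\ell_N\}$ and $M$ unique-order regions, there are subsets $s_1,\dots,s_{M'}\subseteq\{1,\dots,N\}$ with $M'\le M$ and $f(x)=\max_{1\le i\le M'}\min_{j\in s_i}\ell_j(x)$ for all $x$; padding with $s_{M'+1}=\dots=s_M:=s_1$ leaves the outer $\max$ unchanged, so I may assume there are exactly $M$ subsets. I would then build the three layer groups of \eqref{eq:nn_representation} in order. \emph{(1) The linear layer.} For each $i$, fix an enumeration of the $N$ elements of $\mathcal{R}_{\hat s_i}(x)$ from \eqref{eq:redundant_replication} and let the $((i-1)N+k)$-th output be the $\pi_2$-value of its $k$-th element; each such value is one of the affine functions $\ell_j$, so this map $\mathbb{R}^n\to\mathbb{R}^{NM}$ is affine and is realized by a single activation-free layer $z\mapsto W^{(1)}z+b^{(1)}$ --- equivalently it is the composition of the affine map $x\mapsto(\ell_1(x),\dots,\ell_N(x))$ with the $\{0,1\}$ routing/replication matrix described after \eqref{eq:redundant_replication}, and a composition of affine maps is affine. \emph{(2) The $\min$ layer.} Attach $M$ copies of $\nn_{\Theta_{\min_N}}$ in parallel, the $i$-th consuming coordinates $(i-1)N+1,\dots,iN$; because $\max s_i\in s_i$, the replicated value $\ell_{\max s_i}(x)$ already occurs among $\{\ell_j(x):j\in s_i\}$, so the multiset fed to the $i$-th copy has the same minimum as $\{\ell_j(x):j\in s_i\}$, and that copy outputs $\min_{j\in s_i}\ell_j(x)$. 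Implementing $M$ disjoint parallel copies with block-diagonal weights and stacked biases multiplies every layer dimension of $\Theta_{\min_N}$ by $M$, which is exactly the meaning of $M\cdot\text{Arch}(\Theta_{\min_N})$. \emph{(3) The $\max$ layer.} Feed the resulting $M$ outputs into one copy of $\nn_{\Theta_{\max_M}}$; its output is $\max_{1\le i\le M}\min_{j\in s_i}\ell_j(x)=f(x)$ by the lattice form above.

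Finally I would do the bookkeeping. The output dimension of the linear layer is $NM$, matching the input of the $\min$ layer; the output of the $\min$ layer is $M$, matching the input of the $\max_M$ network; concatenating the three architecture lists and flattening nested lists reproduces \eqref{eq:nn_representation} verbatim. One should also pin down the activation convention: the first (affine) layer must carry no activation, since it has to represent the possibly sign-indefinite functions $\ell_j$, while the final sublayers of the $\min$ and $\max$ networks carry none by the identities \eqref{eq:max_identity}--\eqref{eq:id_as_relu} used to build $\Theta_{\min}$ and $\Theta_{\max}$ --- consistent with the activation-free layers the paper has already allowed. I expect the main obstacle to be precisely this bookkeeping rather than any deep argument: keeping layer dimensions and activation flags straight through the parallel ``$\times M$'' composition and the list-flattening, confirming that the routing/replication in \eqref{eq:redundant_replication} preserves the relevant $\min$, and disposing of degenerate cases (e.g.\ $M=1$, $|s_i|$ small, or very small $N$) where the generic $\max_M$/$\min_N$ templates collapse.
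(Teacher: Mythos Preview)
Your proposal is correct and follows essentially the same constructive approach as the paper: the paper's proof consists of a single sentence pointing back to the preceding discussion (the lattice form of Theorem~\ref{thm:lattice_form}, the $\max$/$\min$ ReLU gadgets, the routing/replication of \eqref{eq:redundant_replication}, and Figure~\ref{fig:overall_architecture}), which is exactly the construction you spell out. Your write-up is in fact more careful than the paper's on several points---padding $M'\le M$ up to $M$, collapsing the two affine layers into the single $(n,N\cdot M)$ layer required by \eqref{eq:nn_representation}, and flagging that this first layer must be activation-free---none of which the paper makes explicit.
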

\begin{proof}
	The proof is constructive: the discussion above explains the construction, 
	which is based on \cite[Theorem 
	4.1]{TarelaRegionConfigurationsRealizability1999}.
\end{proof}
\begin{corollary}
\label{cor:controller_arch}
	Any CPWL controller $\mu$ (such as $\mu_\text{MPC}$) can be implemented by 
	a ReLU network $\Theta$ with architecture $\textnormal{Arch}(\Theta_{N,M})$ 
	as described in Theorem \ref{thm:nn_representation}, where $N$ is the 
	number local linear functions of $\mu$ and $M$ is the number of 
	unique-order regions of $\mu$.
\end{corollary}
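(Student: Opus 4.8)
The plan is to derive the corollary directly from Theorem~\ref{thm:nn_representation}, the only substantive step being the passage from scalar-valued to vector-valued CPWL functions. For $m = 1$ there is nothing to prove: a CPWL controller $\mu : \mathbb{R}^n \to \mathbb{R}$ is a CPWL function in the sense of Definition~\ref{def:local_linear_function}, so taking $f = \mu$ in Theorem~\ref{thm:nn_representation} and reading $\text{Arch}(\Theta_{N,M})$ from~\eqref{eq:nn_representation} is the whole argument. (That $\mu_\text{MPC}$ is CPWL only on a polytope rather than on all of $\mathbb{R}^n$ is harmless: the lattice form~\eqref{eq:lattice_form} built from $\mu$'s local linear functions is itself CPWL on all of $\mathbb{R}^n$ and agrees with $\mu$ on its domain, so it supplies the extension Theorem~\ref{thm:nn_representation} needs; and Problem~\ref{prob:main_problem} only asks for agreement on a compact set.)

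For $m > 1$ I would first fix the vector-valued reading of Definitions~\ref{def:local_linear_function} and~\ref{def:order_region}: a local linear function of $\mu$ is an affine map $\ell : \mathbb{R}^n \to \mathbb{R}^m$ agreeing with $\mu$ on a nonempty open set, and a unique-order region is a region of the hyperplane arrangement generated by \emph{all} the scalar intersection hyperplanes $\{x : \ell_{i,r}(x) = \ell_{j,r}(x)\}$ over pairs $i,j$ and coordinates $r = 1, \dots, m$ --- equivalently, the common refinement of the $m$ coordinate arrangements. The purpose of this choice is that on each such region the order of $\ell_{1,r}(x), \dots, \ell_{N,r}(x)$ is fixed for \emph{every} coordinate $r$ at once, so one family of at most $M$ regions serves all coordinates simultaneously.

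Next I would apply Theorem~\ref{thm:lattice_form} to each scalar component $\mu_r$ --- its local linear functions being the $r$-th components $\ell_{1,r}, \dots, \ell_{N,r}$ of the $\ell_j$, and its unique-order regions being unions of the regions above --- obtaining subsets $s_i^{(r)} \subseteq \{1, \dots, N\}$ with $\mu_r(x) = \max_i \min_{j \in s_i^{(r)}} \ell_{j,r}(x)$, where $i$ indexes the regions (at most $M$ of them). Since all $m$ of these scalar lattice forms index into the same pool of $N$ local linear functions and the same family of $\le M$ regions, they assemble into a coordinate-wise $\max$-over-regions of coordinate-wise $\min$'s --- precisely the element-wise $\min$/$\max$ network construction described just before Theorem~\ref{thm:nn_representation}. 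Instantiating that construction in vectorized form --- an initial linear layer $\mathbb{R}^n \to \mathbb{R}^{m \cdot N \cdot M}$ evaluating the $\ell_j$ with $M$-fold replication, the routing layer of~\eqref{eq:redundant_replication} now selecting $s_i^{(r)}$ for each region $i$ and coordinate $r$, $M$ copies of the element-wise $N$-input $\min$ network, and one element-wise $M$-input $\max$ network --- yields exactly $\text{Arch}(\Theta_{N,M})$ (with signal widths scaled by $m$, as in Figure~\ref{fig:overall_architecture}), and the weight choices from the proof of Theorem~\ref{thm:nn_representation} make $\nn_\Theta = \mu$ everywhere. Specializing $\mu = \mu_\text{MPC}$ completes the proof.

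The only point requiring care is the common-refinement definition of unique-order region in the second paragraph: one must check that a single arrangement of $M$ regions fixes the order of the local linear functions for \emph{all} output coordinates, so that $N$ and $M$ are genuinely common to every component rather than per-component quantities --- otherwise~\eqref{eq:nn_representation} would not apply verbatim and, more importantly, the later over-approximation of $M$ for $\mu_\text{MPC}$ would not be the quantity that governs the architecture. This is bookkeeping rather than deep mathematics, but it is the step that has to be gotten right.
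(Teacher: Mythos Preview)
Your proposal is correct and follows the same line as the paper: the corollary is stated there without a separate proof, treated as immediate from Theorem~\ref{thm:nn_representation} together with the preceding paragraph on extending the architecture to vector-valued functions by making the pairwise $\min$/$\max$ networks act element-wise. Your write-up is simply a more careful execution of that sketch --- in particular, your explicit common-refinement definition of unique-order region and the check that a single pair $(N,M)$ governs all $m$ output coordinates make precise what the paper leaves implicit.
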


Note that in many cases it is hard to exactly know the parameters $N$ and $M$ exactly. The next result show that our correctness claims in Theorem~\ref{alg:estimateRegionCount} can be extended when an upper bound $\overbar{N} \ge N$ and $\overbar{M} \ge M$ is used to design the neural network architecture as explained in the next result.
\begin{theorem}
\label{thm:embedding}
	Let $\Theta_{N,M}$ be a parameter list such that 
	$\textnormal{Arch}(\Theta_{N,M})$ is as specified in Theorem 
	\ref{thm:nn_representation}, \eqref{eq:nn_representation}, and let 
	$\overbar{N} \geq N$ and $\overbar{M} \geq M$. Then there exists a 
	parameter list $\Theta_{\overbar{N},\overbar{M}}$ with 
	$\textnormal{Arch}(\Theta_{\overbar{N},\overbar{M}})$ as in 
	\eqref{eq:nn_representation} such that:
	\begin{equation}
		\nn_{\Theta_{N,M}}(x) = \nn_{\Theta_{\overbar{N},\overbar{M}}}(x) \quad \forall x.
	\end{equation}
\end{theorem}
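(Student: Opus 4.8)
The plan is to exhibit the required $\Theta_{\overbar{N},\overbar{M}}$ explicitly by \emph{padding} $\Theta_{N,M}$ with redundant copies of its own internal signals, relying on two elementary facts about finite sets: (i) adjoining to a set an element no smaller than its current minimum does not change the minimum, and (ii) adjoining to a set an element no larger than its current maximum does not change the maximum.

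First I would unwind what $\nn_{\Theta_{N,M}}$ computes, given that $\textnormal{Arch}(\Theta_{N,M})$ has the form \eqref{eq:nn_representation}. The first (linear) layer produces $N\cdot M$ affine functions of $x$; grouping them into $M$ consecutive blocks of $N$, write $\ell^{(i)}_1,\dots,\ell^{(i)}_N$ for the functions in block $i$. Block $i$ feeds the $i$-th copy of $\Theta_{\min_N}$, which, being a correct multi-element $\min$ network (the divide-and-conquer construction preceding Theorem~\ref{thm:nn_representation}), outputs $\min_{1\le j\le N}\ell^{(i)}_j(x)$; these $M$ values feed $\Theta_{\max_M}$, so $\nn_{\Theta_{N,M}}(x) = \max_{1\le i\le M}\min_{1\le j\le N}\ell^{(i)}_j(x)$. (This is just the lattice form \eqref{eq:lattice_form} with the routing/replication of \eqref{eq:redundant_replication} folded into the first layer.)

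Next I would construct $\Theta_{\overbar{N},\overbar{M}}$, which by hypothesis must have architecture \eqref{eq:nn_representation} with $\overbar{N},\overbar{M}$ in place of $N,M$: a first linear layer with $\overbar{N}\cdot\overbar{M}$ outputs grouped into $\overbar{M}$ blocks of $\overbar{N}$, then $\overbar{M}$ parallel copies of $\Theta_{\min_{\overbar{N}}}$, then $\Theta_{\max_{\overbar{M}}}$. I choose its first-layer weights and biases so that, for $i\le M$, block $i$'s first $N$ entries reproduce $\ell^{(i)}_1,\dots,\ell^{(i)}_N$ and its remaining $\overbar{N}-N$ entries are further copies of $\ell^{(i)}_1$; and for $M<i\le\overbar{M}$, block $i$ is an exact copy of block $1$ in all $\overbar{N}$ entries. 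By fact (i), since the padding entries in block $i$ duplicate $\ell^{(i)}_1$, which is already an argument of that block's $\min$, the $i$-th $\min_{\overbar{N}}$ network outputs $\min_{1\le j\le N}\ell^{(i)}_j(x)$ for $i\le M$ and $\min_{1\le j\le N}\ell^{(1)}_j(x)$ for $i>M$. By fact (ii), the extra blocks each contribute a value already contributed by block $1$, so the $\max_{\overbar{M}}$ network outputs $\max_{1\le i\le M}\min_{1\le j\le N}\ell^{(i)}_j(x)$, i.e. $\nn_{\Theta_{\overbar{N},\overbar{M}}}(x)=\nn_{\Theta_{N,M}}(x)$ for all $x$.

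The main obstacle to keep in mind is that this is \emph{not} a literal sub-network embedding: because $\Theta_{\min_N}$ and $\Theta_{\min_{\overbar{N}}}$ (likewise $\Theta_{\max_M}$ and $\Theta_{\max_{\overbar{M}}}$) are binary trees whose depth grows like $\lceil\log_2 N\rceil$, one cannot drop the smaller parameter list into a corner of the larger architecture; one must respecify weights for the larger trees and invoke only their correctness as $\min$/$\max$ networks. The remaining points are bookkeeping that mirrors the discussion around \eqref{eq:redundant_replication}: that padding a $\min$-block with one of \emph{its own} entries keeps its output fixed, that cloning an entire $\min$-block keeps the top-level $\max$ fixed, and that a linear layer of the prescribed width can realize any prescribed list of affine functions of $x$ (in particular, any repetition of the $\ell^{(i)}_j$), so the reproduction and copying above is achievable by an admissible weight assignment. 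None of this requires computation beyond what is already implicit in Theorem~\ref{thm:nn_representation}.
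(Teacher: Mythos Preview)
Your proposal is correct and follows essentially the same approach as the paper: both argue that the larger architecture can realize the same function by having the extra first-layer outputs duplicate existing affine computations (you copy $\ell^{(i)}_1$ and block $1$, the paper copies $\ell_N$ and the $M^{\text{th}}$ subset, an immaterial difference), so that the enlarged $\min$ and $\max$ stages see only redundant inputs and return the same values. If anything, you are slightly more careful than the paper in noting explicitly that the $\min_{\overbar{N}}$ and $\max_{\overbar{M}}$ subnetworks have different depths than their smaller counterparts, so one must re-instantiate them and invoke their functional correctness rather than literally embed the old weights.
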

\begin{proof}
	In order to implement the same function with a larger network, the extra 
	linear-layer neurons can simply duplicate calculations carried out by 
	neurons in the smaller network. For example, the extra neurons in the the 
	first linear layer can duplicate the calculation of $\ell_N$, and the extra 
	neurons in the second linear layer can duplicate the calculation of the 
	$M^\text{th}$ subset of $\{\ell_1, \dots, \ell_N\}$. This will not change 
	the output of the $\min$ and $\max$ layers.
\end{proof}
\noindent \emph{Note}: when ``embedding'' a smaller network, $\Theta_{N,M}$, 
into a larger one, $\Theta_{\overbar{N},\overbar{M}}$, it is incorrect to set 
the extra parameters in $\Theta_{\overbar{N},\overbar{M}}$ to zero, as this 
could affect the output of the $\min$ and $\max$ networks!

Thus, to use this framework to obtain an architecture that is capable of 
implementing $\mu_{\text{MPC}}$, one needs to simply upper-bound the number of 
local linear functions $N$ in $\mu_{\text{MPC}}$ (ultimately without solving 
the actual MPC problem) and upper-bound the number of unique order regions $M$ 
in $\mu_{\text{MPC}}$. This is precisely the AReN algorithm, as specified in 
Algorithm \ref{alg:main_algorithm}. The constituent functions 
\texttt{EstimateRegionCount} and \texttt{EstimateUniqueOrder} are described in 
detail in the subsequent sections Section \ref{sec:mpc_regions} and Section 
\ref{sec:uo_approx}, respectively. The implementation of the function 
\texttt{InferArchitecture} follows directly from the (constructive) discussion 
in this section.
\begin{figure*}
	\centering 
	\includegraphics[width=0.95\textwidth]{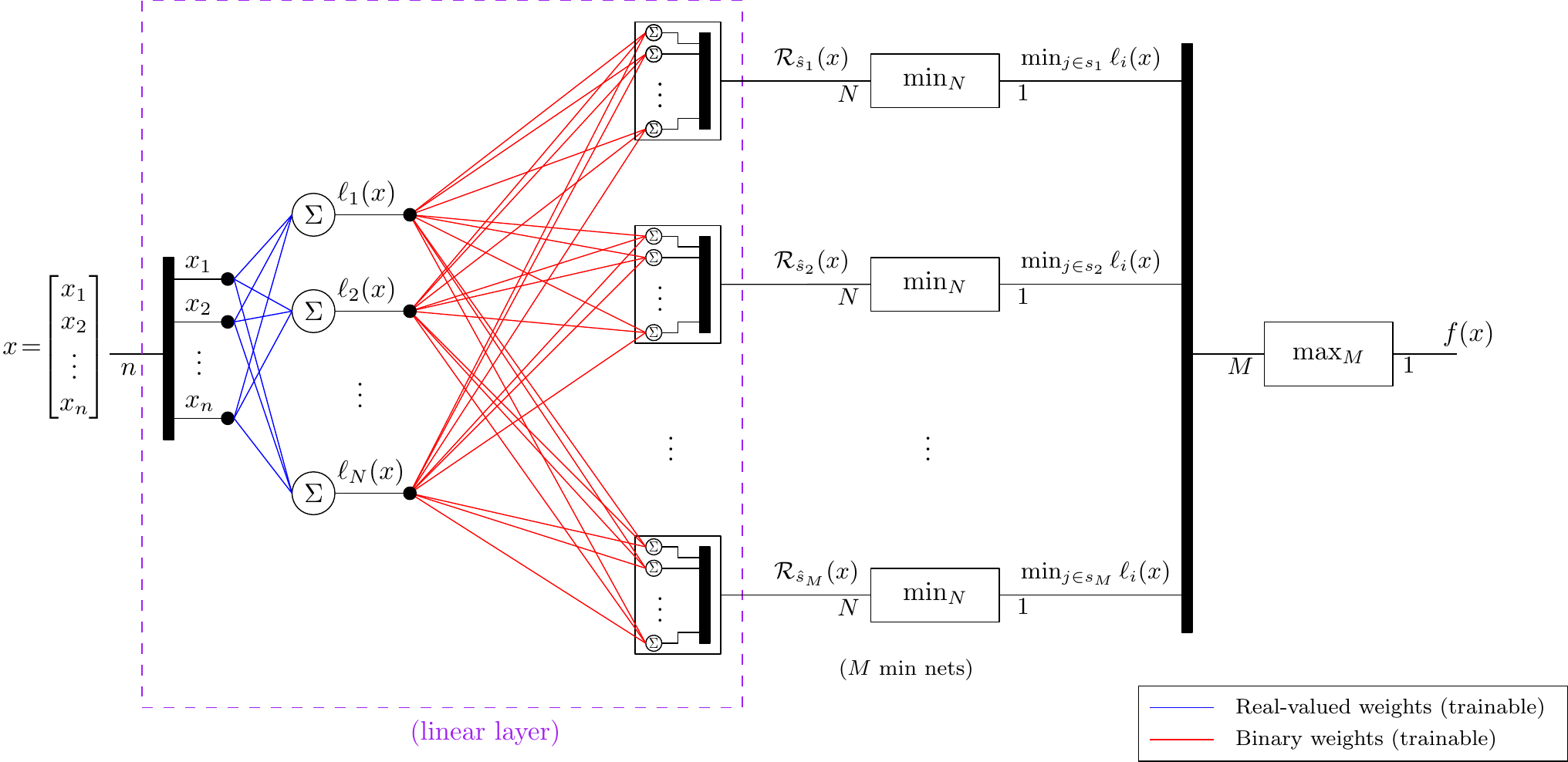} %
	\caption{Illustration of the overall architecture to implement a scalar 
	CPWL function. The symbols for various signals are indicated above the 
	line, and their dimensions are indicated below the line. The red lines 
	represent a fully-connected linear layer in which the weights \emph{flowing 
	into a single summer} have the property that \emph{exactly} one of them is 
	equal to $1$, and the others are all $0$. $\mathcal{R}_{\hat{s}_i}(x)$ is 
	defined in \eqref{eq:redundant_replication}.}
	\label{fig:overall_architecture} \vspace{-5mm}
\end{figure*}

\begin{algorithm}
\SetKwData{RegCnt}{M\_est}
\SetKwData{LinFnCnt}{N\_est}
\SetKwData{Arch}{ArchList}
\SetKwFunction{GetArch}{GetArchitecture}
\SetKwFunction{CntRegions}{EstimateRegionCount}
\SetKwFunction{CntUORegions}{EstimateUniqueOrderCount}
\SetKwFunction{InfArch}{InferArchitecture}
\SetKwInOut{Input}{input}
\SetKwInOut{Output}{output}
\Input{system matrices $A$,$B$,$C$; cost matrices $P,Q \geq 0$, $R>0$; feedback matrix $K$; horizon $N_c$}
\Output{$(K,\dim(\theta^{(1)}_1), \dots , \dim(\theta^{(K)}_1))$}
\BlankLine
\SetKwProg{Fn}{function}{}{end}%
\Fn{\GetArch{A,B,C,P,Q,R,K,$\text{N}_\text{c}$}}{
	\LinFnCnt $\leftarrow$ \CntRegions{A,B,C,P,Q,R,K,$\text{N}_\text{c}$}

	\RegCnt $\leftarrow$ \CntUORegions{\LinFnCnt}

	\Arch $\leftarrow$ \InfArch{\LinFnCnt,\RegCnt}

	\Return \Arch
}
\caption{AReN.}
\label{alg:main_algorithm}
\end{algorithm}




\section{Approximating the number of linear regions in the MPC controller} 
\label{sec:mpc_regions}
In this section, we will discuss our implementation of the function 
\texttt{EstimateRegionCount} from Algorithm \ref{alg:main_algorithm}. A natural 
means to approximate to the number of \emph{local linear functions} of 
$\mu_\text{MPC}$ is to approximate the number of \emph{maximal linear regions} 
in $\mu_\text{MPC}$. 

\begin{definition}[Linear Region of $\mu_\text{MPC}$]
	\label{def:linear_region}
	A \textbf{linear region} of $\mu_\text{MPC}$ is a subset of $\mathscr{R} 
	\subseteq \mathbb{R}^n$ over which $\mu_\text{MPC}(x) = L(x)$ for some 
	linear (affine) $L : \mathbb{R}^n \rightarrow \mathbb{R}^m$. A 
	\textbf{maximal linear region} is a linear region that is strictly 
	contained in no other linear regions. Two linear regions are said to be 
	\textbf{distinct} if they correspond to different linear functions, $L$.
\end{definition}
\noindent Thus, the maximal linear regions of $\mu_\text{MPC}$ are in 
one-to-one correspondence with the \emph{local linear functions} in 
$\mu_\text{MPC}$ (Definition \ref{def:local_linear_function}), so an upper 
bound on the number of maximal linear regions in $\mu_\text{MPC}$ is an upper 
bound on its number of local linear function, which in turn will provide an 
over-approximation of $N$ that can be used to generate a NN architecture.

To upper bound the number of maximal linear regions effectively, we need to consider in detail \emph{some} specifics 
about how the piecewise linear property arises in the solution for 
$\mu_\text{MPC}$. Ultimately, $\mu_\text{MPC}$ is piecewise linear because we 
have posed a problem for which (i) the gradient of the Lagrangian 
\eqref{eq:qp_grad} is linear in \emph{both} the Lagrange multipliers \emph{and} 
the decision variable; and (ii) the dependence on the initial state $x(t)$ is 
linear. Linearity is important in both (i) and (ii) because we are really not 
solving one optimization problem but a \emph{family} of them: one for each 
initial state $x(t)$. Thus, the linearity of the Lagrangian together with the 
linearity of the inequality constraints in $x(t)$ leads to an equation (a 
necessary optimality condition) that is linear in both the Lagrange multipliers 
and the initial state $x(t)$: hence the piecewise-linear controller 
$\mu_\text{MPC}$.

Moreover, the distinct linear regions of $\mu_\text{MPC}$ -- i.e. those with 
distinct linear functions -- arise out of a particular aspect of the 
aforementioned linear equations. In particular, the Lagrange multipliers, 
$\lambda$, and the initial state, $x(t)$, appear together in a linear equation 
that has different solutions -- and hence creates different linear regions for 
$\mu_\text{MPC}$ -- based on which of the inequality constraints are 
\emph{active} (at a particular optimizer) \cite[Theorem 
2]{BemporadExplicitLinearQuadratic2002}. Since the linear regions obtained in 
this way \emph{partition} the domain of $\mu_\text{MPC}$ (see also Proposition 
\ref{prop:linear_region_partition} below), this suggests that we can 
over-approximate the number of linear regions in $\mu_\text{MPC}$ by counting 
all of the possible constraints that can be active at the same time. Indeed, 
this is more or less how \texttt{EstimateRegionCount} arrives at an estimate 
for $N$, although we do not simply over-approximate with $2^\text{\# of 
constraints}$.

\subsection{The Optimal MPC Controller} 
\label{sub:the_optimal_mpc_controller}
As preparation for the rest of the section, we begin by summarizing some 
further details regarding the solution of $\mu_\text{MPC}$ from 
\cite{BemporadExplicitLinearQuadratic2002}. In particular, the optimization 
problem specified by \eqref{eq:cost_fn}-\eqref{eq:last_constraint} can be 
simplified by directly substituting the dynamics constraint 
\eqref{eq:dynamics_constraint} to get:
\begin{align}
 	&\min_U \left\{\frac{1}{2} x(t)^\text{T} \; Y \; x(t) + \frac{1}{2} U^\text{T} \; H \; U + x(t)^\text{T} \; F \; U \right\} \label{eq:simplified_qp}\\
 	&\text{subject to: } \qquad
 	\quad G U \leq W + E x(t) \notag
\end{align}
with appropriately defined matrices $H$, $F$, $G$, $W$ and $E$ of dimensions 
$(\rho \times \rho)$, $(n \times \omega)$, $(\rho \times \omega)$, $(\rho 
\times 1)$ and $(\rho \times n)$, respectively (where $\omega$ and $\rho$ are defined in~\eqref{eq:omega} and \eqref{eq:rho}, respectively). Then, completing the square by 
means of the change of variables $z \triangleq U + H^{-1} F^\text{T} x(t)$ 
provides the following, simplified quadratic program 
\cite{BemporadExplicitLinearQuadratic2002}:
\begin{equation}
\label{eq:main_qp}
	\min_z \left\{\frac{1}{2} z^\text{T} \; H \; z \right\}
	\quad
 	\text{ subject to: } 
 	\quad G z \leq W + S x(t).
\end{equation}
Note that this change of variables is only valid when $H$ is invertible, but 
this is assumed in \cite{BemporadExplicitLinearQuadratic2002}, and it will be 
assumed throughout.

A solution to the optimization problem in \eqref{eq:main_qp} can be easily 
formulated using the KKT (necessary) optimality conditions 
\cite{LuenbergerOptimizationVectorSpace1997}. This results in the following 
system of equations:
\begin{align}
	Hz + G^\text{T}\lambda &= 0 \qquad \lambda \in \mathbb{R}^\rho \label{eq:qp_grad}\\
	\lambda_i (G_i z - W_i - S_i x) &= 0 \qquad i \in 1 \ldots \rho \label{eq:multiplier} \\
	\lambda &\geq 0 \\
	G z &\leq W + S x . \label{eq:orig_inequality_constraints}
\end{align}
That is for a (local) minimizer to the optimization problem \eqref{eq:main_qp}, 
there exists non-negative Lagrange multipliers $\lambda$ that solve the above 
system. Moreover, solving \eqref{eq:qp_grad} for the minimizer, $z$, 
demonstrates that such a minimizer has a particular structure. Indeed, under 
the assumption that $H$ is invertible, we may solve \eqref{eq:qp_grad} for $z$ 
and substituted it into \eqref{eq:orig_inequality_constraints} to obtain:
\begin{equation}
\label{eq:inequality_z_free}
	-G H^{-1} G^\text{T} \lambda \leq W + S x.
\end{equation}

Now given the relevance of \emph{active constraints} in \eqref{eq:qp_grad} - 
\eqref{eq:orig_inequality_constraints} to\cite[Theorem 
2]{BemporadExplicitLinearQuadratic2002}, we introduce the following notation.
\begin{definition}[Selector Matrix]
	Let $\alpha \subseteq \{1, \dots, \rho\}$, and define the auxiliary set 
	(and associated vector):
	\begin{align}
		\tilde{\alpha} &\triangleq \{ (1, \min_{j \in \alpha}j), \dots, (|\alpha|, \max_{j \in \alpha} j) \}. \notag
	\end{align}
	Then the \textbf{selector matrix}, $\tilde{I}_\alpha$ is defined as:
	\begin{equation}
		\tilde{I}_\alpha \triangleq [e_{\tilde{\alpha}(1)}, \dots, e_{\tilde{\alpha}(|\alpha|)}]^\text{T}
	\end{equation}
	where $e_j$ is the $j^\text{th}$ column of the $(\rho \times \rho)$ 
	identity matrix.
\end{definition}

The selector matrix can thus be used to describe equality in 
\eqref{eq:inequality_z_free} for the active constraints by removing 
inequalities associated with the inactive constraints. In particular, given a 
set of active constraints specified by a subset $\alpha \subseteq \{1, \dots, 
\rho \}$, the Lagrange multipliers for the active constraints, 
$\lambda_\alpha$, will satisfy the following:
\begin{equation}
	\label{eq:first_approx}
	- \tilde{I}_\alpha G (H^{-1} G^\text{T} \tilde{I}_\alpha^\text{T}) \lambda_\alpha = \tilde{I}_\alpha (W + S x).
\end{equation}
In particular, \eqref{eq:first_approx} is a linear equation in 
$\lambda_\alpha$ and $x$ that ultimately specifies the region in $x$-space 
over which a single affine function characterizes $\mu_\text{MPC}$. Indeed, 
back substituting solutions of \eqref{eq:first_approx} into 
\eqref{eq:orig_inequality_constraints} and $\lambda \geq 0$ specifies a set of 
linear inequalities in $x$. These equivalently define an intersection of 
half-spaces in $\mathbb{R}^n$ that characterize a (convex) linear region of 
$\mu_\text{MPC}$ \cite[Theorem 2]{BemporadExplicitLinearQuadratic2002}. 
Moreover, since every possible solution to the optimization problem admits some 
set of active constraints, these convex linear regions partition the state 
space.

For our purposes, \eqref{eq:first_approx} is the relevant consequence of this 
discussion, since it most readily suggests how to over-approximate the number 
of linear regions of $\mu_\text{MPC}$. We describe how to do this in the next 
subsection.


\subsection{Over-approximating the Number of Maximal Linear Regions} 
\label{sub:over_approximating_the_sets_of_active_constraints}
From the previous discussion, the problem of finding all possible sets of 
active constraints for \eqref{eq:qp_grad} - 
\eqref{eq:orig_inequality_constraints} is a significant amount of the work in 
solving for the optimal controller $\mu_\text{MPC}$. However, we are content 
not solving for $\mu_\text{MPC}$ \emph{exactly}: thus, we only need to simplify 
\eqref{eq:first_approx} in such a way that we obtain a new equation with all of 
the same solutions plus some spurious ones (keep in mind that 
\eqref{eq:first_approx} is an equation in $\alpha$, too).

Before we begin counting regions, we need to state the following 
proposition, which is trivial given the observations in 
\cite{BemporadExplicitLinearQuadratic2002}.
\begin{proposition}
\label{prop:linear_region_partition}
	Let $\bar{\mathscr{R}}$ be a \emph{maximal} linear region for controller 
	$\mu_\text{MPC}$. Then there exists a finite collection of sets 
	$\Gamma_{\bar{\mathscr{R}}} = \{ \alpha_i \subseteq \{1, \dots, \rho \}: i 
	= 1, \dots V \}$ with the following property:

	\begin{itemize}
		\item for every $x \in \bar{\mathscr{R}}$, there exists an $\alpha_x 
			\in \Gamma$ and $|\alpha_x|$ Lagrange multipliers 
			$\lambda_{\alpha_x} \geq 0$ such that:
				\begin{equation}
					\tilde{I}_{\alpha_x} G H^{-1} G^\text{T} \tilde{I}_{\alpha_x}^\text{T} \lambda_{\alpha_x} = \tilde{I}_{\alpha_x} (W + S x).
					\label{eq:40}
				\end{equation}
	\end{itemize}
	In particular, any maximal linear region of $\mu_\text{MPC}$ can be 
	partitioned into $|\Gamma|$ convex linear regions.
\end{proposition}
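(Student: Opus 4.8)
The plan is to trace the construction of a maximal linear region $\bar{\mathscr{R}}$ back through the KKT characterization in \cite[Theorem 2]{BemporadExplicitLinearQuadratic2002} and extract the finite collection $\Gamma_{\bar{\mathscr{R}}}$ directly. First I would recall the key fact established in that reference and summarized around \eqref{eq:first_approx}: for each fixed set of active constraints $\alpha \subseteq \{1,\dots,\rho\}$, the set of states $x$ admitting a KKT solution with exactly that active set is a (possibly empty) convex polyhedron — call it $\mathscr{P}_\alpha$ — obtained by solving \eqref{eq:40} for $\lambda_\alpha$ in terms of $x$ and back-substituting into $\lambda \geq 0$ and $Gz \leq W + Sx$. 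On each nonempty $\mathscr{P}_\alpha$, the optimizer $z$ (hence the first control move $u_t^*$, hence $\mu_\text{MPC}$) is a single affine function of $x$. Since every $x$ in the domain yields an optimizer with \emph{some} active set, the polyhedra $\{\mathscr{P}_\alpha : \alpha \subseteq \{1,\dots,\rho\},\ \mathscr{P}_\alpha \neq \emptyset\}$ cover the domain; there are finitely many of them because there are at most $2^\rho$ subsets $\alpha$.

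Next I would define $\Gamma_{\bar{\mathscr{R}}}$ to be exactly the collection of those active sets whose polyhedron meets $\bar{\mathscr{R}}$, i.e. $\Gamma_{\bar{\mathscr{R}}} \triangleq \{\alpha \subseteq \{1,\dots,\rho\} : \mathscr{P}_\alpha \cap \bar{\mathscr{R}} \neq \emptyset\}$. This is finite, being a subcollection of the finite family above; enumerate it as $\{\alpha_1,\dots,\alpha_V\}$. The stated bullet point then follows immediately: given $x \in \bar{\mathscr{R}}$, pick any optimizer of \eqref{eq:main_qp} at $x$, let $\alpha_x$ be its active set; then $x \in \mathscr{P}_{\alpha_x}$ by construction of $\mathscr{P}_{\alpha_x}$, so $\mathscr{P}_{\alpha_x} \cap \bar{\mathscr{R}} \ni x$ is nonempty and hence $\alpha_x \in \Gamma_{\bar{\mathscr{R}}}$, and the Lagrange multipliers $\lambda_{\alpha_x} \geq 0$ furnished by the KKT conditions satisfy \eqref{eq:40} by the derivation leading to \eqref{eq:first_approx}. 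For the final sentence — that $\bar{\mathscr{R}}$ is partitioned into $|\Gamma_{\bar{\mathscr{R}}}|$ convex linear regions — I would observe that $\bar{\mathscr{R}} = \bigcup_{\alpha \in \Gamma_{\bar{\mathscr{R}}}} (\mathscr{P}_\alpha \cap \bar{\mathscr{R}})$, each piece $\mathscr{P}_\alpha \cap \bar{\mathscr{R}}$ is convex (intersection of convex polyhedra, since a maximal linear region of $\mu_\text{MPC}$ is itself convex by \cite[Theorem 2]{BemporadExplicitLinearQuadratic2002}) and is a linear region of $\mu_\text{MPC}$ (on it $\mu_\text{MPC}$ agrees with the single affine function attached to $\alpha$), and the pieces have disjoint interiors because distinct nonempty $\mathscr{P}_\alpha$ have disjoint interiors in the arrangement used by \cite{BemporadExplicitLinearQuadratic2002}.

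The only subtlety — and the step I would be most careful about — is the potential non-uniqueness of the active set: a given $x$ may admit several KKT-optimal active sets (degenerate QP), and a given affine piece of $\mu_\text{MPC}$ may correspond to more than one $\alpha$. This is precisely why the statement says ``at most $M$'' / ``a finite collection'' rather than a bijection, and why $\Gamma_{\bar{\mathscr{R}}}$ is allowed to be a possibly redundant cover rather than a partition of index sets; the convex \emph{pieces} $\mathscr{P}_\alpha \cap \bar{\mathscr{R}}$ still tile $\bar{\mathscr{R}}$ up to lower-dimensional overlaps, which is all ``partitioned into $|\Gamma|$ convex linear regions'' needs to mean here. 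I would flag this explicitly but not belabor it, since the proposition is called ``trivial given the observations in \cite{BemporadExplicitLinearQuadratic2002}'' and the whole content is really just repackaging that reference's polyhedral partition theorem restricted to a single maximal region.
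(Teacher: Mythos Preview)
Your proposal is correct and follows essentially the same route as the paper: both arguments associate to each $x \in \bar{\mathscr{R}}$ the active set of the (unique, since $H \succ 0$) optimizer, collect those active sets into $\Gamma_{\bar{\mathscr{R}}}$, verify \eqref{eq:40} via the KKT conditions, and defer convexity of the pieces to \cite{BemporadExplicitLinearQuadratic2002}. The paper's only packaging difference is that it defines an explicit map $\textnormal{act}\colon x \mapsto \alpha_x$ and takes $\Gamma_{\bar{\mathscr{R}}}$ to be its range, so the partition is literally the collection of level sets of $\textnormal{act}$; your formulation via pre-built polyhedra $\mathscr{P}_\alpha$ intersected with $\bar{\mathscr{R}}$ is equivalent.

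One small correction worth making: a \emph{maximal} linear region of $\mu_{\text{MPC}}$ need not itself be convex --- distinct critical regions that happen to carry the same affine control law can form a non-convex union --- so you should not justify convexity of $\mathscr{P}_\alpha \cap \bar{\mathscr{R}}$ as ``intersection of convex polyhedra.'' The fix is immediate and actually simplifies your argument: whenever $\mathscr{P}_\alpha$ meets $\bar{\mathscr{R}}$ in a set of full dimension, maximality of $\bar{\mathscr{R}}$ forces $\mathscr{P}_\alpha \subseteq \bar{\mathscr{R}}$, so the piece is simply $\mathscr{P}_\alpha$, which is convex by \cite[Theorem~2]{BemporadExplicitLinearQuadratic2002}. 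The paper sidesteps this issue by citing the same reference together with its degeneracy discussion rather than arguing convexity directly.
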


\begin{proof}
	Since we are considering a maximal linear region of $\mu_\text{MPC}$, the 
	quadratic program \eqref{eq:main_qp} is feasible for every $x \in  
	\bar{\mathscr{R}}$ by definition. Consequently, there is for every such 
	$x$, a  unique solution $z^*_x$\footnote{This is because $H$ is positive 
	definite \cite[pp. 9]{BemporadExplicitLinearQuadratic2002}.}, and so by 
	necessity, there is some set of constraints $\alpha_x \subseteq \{1, \dots, 
	\rho\}$ that is active at $z^*_x$. Moreover, by the KKT necessary 
	conditions, there exists $|\alpha|$ Lagrange multipliers 
	$\lambda_{\alpha_x}$ that satisfy \eqref{eq:40}. This proves the 
	existential assertions for $x \in \bar{\mathscr{R}}$ related to 
	\eqref{eq:40}.

	However, we have implicitly defined a \emph{function} that associates to 
	each $x \in \bar{\mathscr{R}}$ a subset in $2^{\{1, \dots, \rho\}}$:
	\begin{align}
		\textnormal{act}: \mathbb{R}^n &\rightarrow 2^{\{1, \dots, \rho\}} \notag \\
		x &\mapsto \alpha_x.
	\end{align}
	We define $\Gamma_{\bar{\mathscr{R}}} = 
	\textnormal{act}(\bar{\mathscr{R}})$ to be the range of $\textnormal{act}$, 
	so that equivalence modulo $\textnormal{act}$  partitions 
	$\bar{\mathscr{R}}$ into $|\Gamma|$ disjoint regions. Finally, by 
	\cite[Theorem 2]{BemporadExplicitLinearQuadratic2002} and the discussion 
	about degeneracy on \cite[pp. 9]{BemporadExplicitLinearQuadratic2002}, each 
	of these regions is necessarily convex.
\end{proof}

Proposition \ref{prop:linear_region_partition} gives us a hint about how to 
over-approximate the number of maximal linear regions: in particular, we will 
simplify equation \eqref{eq:40} in such a way that we still find solutions for 
each $\alpha \in \Gamma$ at the expense of including solutions for $\alpha 
\not\in \Gamma$. This gives us our first main counting theorem:

\begin{theorem}
\label{thm:first_over-approx_theorem}
	Let $\Xi \subseteq 2^{\{1, \dots, \rho\}} \backslash \emptyset$ such that 
	for every $\alpha \in \Xi$ there exists a vector $\eta_\alpha \in \mathbb{R}^\rho$ such that:
	\begin{equation}
	\label{eq:second_overapprox}
		G H^{-1} G^\text{T} \tilde{I}_\alpha \lambda_\alpha = \eta_\alpha.
	\end{equation}
	has a solution $\lambda_\alpha \geq 0$, $\lambda_\alpha \neq 0$. Then 
	$|\Xi| + 1$ upper bounds the number of maximal linear regions for 
	$\mu_\text{MPC}$.
\end{theorem}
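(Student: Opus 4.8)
The plan is to show that the collection $\Xi$ described in the theorem can be put in correspondence with (a superset of) the collection of active-constraint sets $\Gamma_{\bar{\mathscr{R}}}$ that Proposition \ref{prop:linear_region_partition} associates to each maximal linear region, and then count. The key is that equation \eqref{eq:second_overapprox} is a \emph{relaxation} of equation \eqref{eq:40}: every $\alpha$ that actually arises from a maximal linear region admits a solution to \eqref{eq:40}, hence a fortiori admits a nonzero nonnegative solution $\lambda_\alpha$ to \eqref{eq:second_overapprox} for a suitable choice of $\eta_\alpha$ (namely $\eta_\alpha = G H^{-1} G^\text{T}\tilde I_\alpha^\text{T}\lambda_\alpha$ read off from the known solution, or more to the point $\tilde I_\alpha \eta_\alpha = \tilde I_\alpha(W+Sx)$). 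So every such $\alpha$ lies in $\Xi$, and distinct maximal linear regions can only inflate, not deflate, the count.

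First I would make precise the correspondence between maximal linear regions and active-constraint sets. By Definition \ref{def:linear_region} and the discussion following \eqref{eq:first_approx}, each maximal linear region $\bar{\mathscr{R}}$ corresponds to one affine piece of $\mu_\text{MPC}$; by Proposition \ref{prop:linear_region_partition}, $\bar{\mathscr{R}}$ is covered by the finitely many convex cells indexed by $\Gamma_{\bar{\mathscr{R}}}$, and on each such cell the optimal $z^*_x$ — and hence the affine formula for $\mu_\text{MPC}$ via $U = z - H^{-1}F^\text{T}x$ — is determined by the active set $\alpha$ through \eqref{eq:first_approx}. In particular at least one $\alpha \in \Gamma_{\bar{\mathscr{R}}}$ is nonempty with a nonzero multiplier, unless the region is the unconstrained one (the interior cell where no constraint is active), which is why the bound is $|\Xi|+1$ rather than $|\Xi|$: that single unconstrained region contributes the ``$+1$'' and need not appear in $\Xi$ since $\Xi \subseteq 2^{\{1,\dots,\rho\}}\setminus\emptyset$ and we require $\lambda_\alpha \neq 0$.

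Then I would run the counting argument: define a map from the set of maximal linear regions (minus the unconstrained one) into $\Xi$ by sending $\bar{\mathscr{R}}$ to any nonempty $\alpha \in \Gamma_{\bar{\mathscr{R}}}$ with nonzero multiplier — such an $\alpha$ lies in $\Xi$ by the relaxation observation above, with $\eta_\alpha$ chosen so that $\tilde I_\alpha\eta_\alpha = \tilde I_\alpha(W+Sx)$ for the relevant $x$. The crux is \textbf{injectivity}: two distinct maximal linear regions carry distinct affine functions (Definition \ref{def:linear_region}), and since the affine formula for $\mu_\text{MPC}$ on a cell is a function of $\alpha$ alone via \eqref{eq:first_approx} (solve the linear system in $\lambda_\alpha$, back-substitute, map $z\mapsto U$), the same $\alpha$ cannot serve two regions with different affine functions. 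Hence the map is injective, so the number of maximal linear regions other than the unconstrained one is at most $|\Xi|$, giving at most $|\Xi|+1$ in total.

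\textbf{The hard part} is the injectivity step — specifically, checking that the affine function of $\mu_\text{MPC}$ on a cell really is determined by $\alpha$ and nothing else, despite possible degeneracy (where \eqref{eq:first_approx} may not have a unique $\lambda_\alpha$, or where the same region admits several active sets). Here I would lean on \cite[Theorem 2]{BemporadExplicitLinearQuadratic2002} and the degeneracy discussion on \cite[pp.~9]{BemporadExplicitLinearQuadratic2002}: either $H \succ 0$ forces $z^*_x$ to be unique so the \emph{primal} piece — which is all we need, since $N$ counts distinct \emph{local linear functions} of the controller — is well-defined per cell, and distinct primal pieces force distinct $\alpha$'s; or one appeals to the standard reduction to a full-rank active subset. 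A secondary subtlety is making sure the relaxation from \eqref{eq:40} to \eqref{eq:second_overapprox} is genuinely a weakening: dropping the left multiplication by $\tilde I_\alpha$ and allowing an arbitrary right-hand side $\eta_\alpha$ only enlarges the solution set for each fixed $\alpha$, so no $\alpha$ arising from a real region is lost. Assembling these, the bound $|\Xi|+1$ follows.
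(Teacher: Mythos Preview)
Your proposal is correct and follows essentially the same route as the paper: invoke Proposition~\ref{prop:linear_region_partition} to attach to each maximal linear region an active-constraint set $\alpha$, observe that \eqref{eq:second_overapprox} is a relaxation of \eqref{eq:40} so every such $\alpha$ lands in $\Xi$, and account for the unconstrained region with the ``$+1$''. The paper's proof is terser---it simply asserts that $\Gamma_{\bar{\mathscr{R}}}\subseteq \Xi\cup\{\emptyset\}$ and that ``the conclusion will follow''---whereas you spell out the injectivity step (same $\alpha$ forces the same affine piece via \eqref{eq:first_approx}, hence distinct maximal regions cannot share an $\alpha$) that the paper leaves implicit; this extra care is warranted and does not constitute a different approach.
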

\begin{proof}
	If we show that for every maximal linear region $\bar{\mathscr{R}}$, the 
	$\Gamma_{\bar{\mathscr{R}}} \subseteq \Xi \cup \emptyset$, then the 
	conclusion will follow.

	But this follows directly from Proposition 
	\ref{prop:linear_region_partition}: for each $\alpha \in 
	\Gamma_{\bar{\mathscr{R}}}$, there exists an $x$ and $\alpha =  
	\lambda_{\alpha_x}$ such that \eqref{eq:40} holds. Thus if $\lambda_\alpha 
	= \lambda_{\alpha_x}$ and $\lambda_{\alpha_x} \neq 0$, then set $\alpha = 
	\alpha_x$, $\eta_\alpha = W + S x$ and conclusion holds. The situation when 
	no constraints are active is accounted for with the addition of $1$ in the 
	final conclusion.
\end{proof}

Theorem \ref{thm:first_over-approx_theorem} is significant because Farkas' 
lemma \cite{LuenbergerOptimizationVectorSpace1997} tells us how to describe the 
solutions of \eqref{eq:second_overapprox} using a linear inequality feasibility 
problem. In particular, \eqref{eq:second_overapprox} has a non-trivial solution 
if and only if the problem $(G H^{-1} G^\text{T} 
\tilde{I}_\alpha^\text{T})^\text{T}\chi \leq 0$ is feasible (and it can't have 
\emph{only} the trivial solution for non-trivial $\lambda_\alpha$). This 
reasoning is included in the proof of the subsequent Theorem, which connects 
the bound from Theorem \ref{thm:first_over-approx_theorem} to the number of 
feasible ``sub-problems'' defined by $G H^{-1} G^\text{T}$. First, we introduce 
the following definition.

\begin{definition}[Non-trivially Feasible]
\label{def:non-trivially_feasible}
	Let $V$ be a $(\rho \times \rho^\prime)$ matrix and let $\alpha \subseteq 
	\{1, \dots, \rho\}$. Then $\alpha$ is a \textbf{non-trivially feasible 
	subset of} $V$ if there exists a $\chi \in \mathbb{R}^{\rho^\prime}$, $\chi 
	\neq 0$ such that $I_\alpha V \chi \geq 0$. Such a set is \textbf{maximal} 
	if adding any other row makes it infeasible.
\end{definition}

Now we state the main theorem in this section.

\begin{theorem}
	\label{thm:second_over-approximation_theorem}
	Let $\mathscr{I}$ be the set of maximal non-trivially feasible subsets of 
	$G H^{-1} G^\text{T}$ (see Definition \ref{def:non-trivially_feasible}). 
	Then the number of maximal linear regions in $\mu_\text{MPC}$ is bounded 
	above by:
	\begin{equation}
		\Big|\bigcup_{\alpha \in \mathscr{I}} 2^\alpha \Big| \leq 
		\sum_{\alpha \in \mathscr{I}} 2^{|\alpha|}.
	\end{equation}
\end{theorem}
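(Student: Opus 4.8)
The plan is to derive the statement from Theorem~\ref{thm:first_over-approx_theorem} together with the Farkas'-lemma translation recorded in the discussion just before it; after those two ingredients, only elementary counting of subsets remains. Concretely, I would let $\Xi$ be the collection of all \emph{nonempty} $\alpha \subseteq \{1,\dots,\rho\}$ for which \eqref{eq:second_overapprox} has, for some $\eta_\alpha$, a solution $\lambda_\alpha \geq 0$ with $\lambda_\alpha \neq 0$. Theorem~\ref{thm:first_over-approx_theorem} applied to this $\Xi$ already states that the number of maximal linear regions of $\mu_\text{MPC}$ is at most $|\Xi| + 1$, so the whole task reduces to proving $|\Xi| + 1 = \big|\bigcup_{\alpha \in \mathscr{I}} 2^{\alpha}\big|$ and then estimating that cardinality.

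The first step is to identify $\Xi$ with a feasibility family. By the Farkas'-lemma argument preceding the statement, for nonempty $\alpha$ the membership $\alpha \in \Xi$ is equivalent to the homogeneous inequality system $(G H^{-1} G^{\text{T}} \tilde{I}_\alpha^{\text{T}})^{\text{T}} \chi \leq 0$ having a nonzero solution $\chi$. Using that $H$ (hence $H^{-1}$ and $G H^{-1} G^{\text{T}}$) is symmetric, this transposed matrix equals $\tilde{I}_\alpha G H^{-1} G^{\text{T}}$, and since $\chi \mapsto -\chi$ preserves nonvanishing, the system $\tilde{I}_\alpha G H^{-1} G^{\text{T}} \chi \leq 0$ has a nonzero solution iff $\tilde{I}_\alpha G H^{-1} G^{\text{T}} \chi \geq 0$ does. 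That is precisely Definition~\ref{def:non-trivially_feasible} with $V = G H^{-1} G^{\text{T}}$: $\alpha \in \Xi$ iff $\alpha$ is a nonempty non-trivially feasible subset of $G H^{-1} G^{\text{T}}$. Since $\emptyset$ is vacuously non-trivially feasible and $\emptyset \notin \Xi$, the family $\Xi \cup \{\emptyset\}$ is \emph{exactly} the set of all non-trivially feasible subsets of $G H^{-1} G^{\text{T}}$, and $|\Xi \cup \{\emptyset\}| = |\Xi| + 1$.

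The second step exploits that non-trivial feasibility is downward closed: if $\alpha$ is non-trivially feasible with witness $\chi$, then every $\beta \subseteq \alpha$ is non-trivially feasible with the same $\chi$, since passing from $\alpha$ to $\beta$ only deletes rows of the system $I_\alpha V \chi \geq 0$. Two consequences follow. First, every non-trivially feasible subset can be greedily enlarged, inside the finite ground set $\{1,\dots,\rho\}$, to a \emph{maximal} one, so the family of all non-trivially feasible subsets is contained in $\bigcup_{\alpha \in \mathscr{I}} 2^{\alpha}$; second, conversely every member of $\bigcup_{\alpha \in \mathscr{I}} 2^{\alpha}$ is a subset of some non-trivially feasible set and hence is itself non-trivially feasible. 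Thus $\Xi \cup \{\emptyset\} = \bigcup_{\alpha \in \mathscr{I}} 2^{\alpha}$, and combining with the first step, the number of maximal linear regions of $\mu_\text{MPC}$ is at most $|\Xi| + 1 = \big|\bigcup_{\alpha \in \mathscr{I}} 2^{\alpha}\big|$. The remaining inequality $\big|\bigcup_{\alpha \in \mathscr{I}} 2^{\alpha}\big| \leq \sum_{\alpha \in \mathscr{I}} |2^{\alpha}| = \sum_{\alpha \in \mathscr{I}} 2^{|\alpha|}$ is just subadditivity of cardinality over a finite union.

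The step I expect to require the most care is the Farkas'-lemma equivalence underpinning the first step: keeping straight the role of the free right-hand side $\eta_\alpha$ in \eqref{eq:second_overapprox}, and verifying that the ``$\lambda_\alpha \neq 0$'' requirement corresponds exactly to the ``$\chi \neq 0$'' nondegeneracy clause in Definition~\ref{def:non-trivially_feasible} --- i.e.\ that the alternative form of Farkas' lemma applied to $G H^{-1} G^{\text{T}} \tilde{I}_\alpha^{\text{T}}$ really does translate ``there is no nonzero nonnegative $\lambda_\alpha$'' into ``the homogeneous inequality has only the trivial solution.'' Everything after that equivalence is bookkeeping with power sets and a union bound.
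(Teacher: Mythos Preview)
Your proposal is correct and follows essentially the same route as the paper's own proof: invoke Theorem~\ref{thm:first_over-approx_theorem}, translate membership in $\Xi$ into non-trivial feasibility via Farkas' lemma, and then observe that every non-trivially feasible subset sits inside some maximal one. You are more explicit than the paper on a few points---the downward-closure argument giving both containments, the ``$+1$'' bookkeeping for the empty set, and the use of symmetry of $H$ together with $\chi\mapsto -\chi$ to reconcile the $\leq$/$\geq$ conventions---but these are elaborations of the same argument rather than a different one.
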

\begin{proof}
	This follows from Farkas' lemma and Theorem 
	\ref{thm:first_over-approx_theorem}.

	In particular, let $\alpha \subseteq \{1, \dots, \rho\}$ and $\eta_\alpha 
	\neq 0 \in \mathbb{R}^{\rho}$. Now, by Farkas' lemma,
	\begin{multline}
		\exists \lambda_\alpha \geq 0 \;.\; G H^{-1} G^\text{T} \tilde{I}_\alpha \lambda_\alpha^\text{T} = \eta_\alpha \Leftrightarrow \\
				\exists \chi \in \mathbb{R}^\rho \text{ s.t. } \eta_\alpha^\text{T} \chi < 0 \text{ and } \tilde{I}_\alpha (G H^{-1} G^\text{T})^\text{T} \chi \leq 0.
	\end{multline}
	In particular, $G H^{-1} G^\text{T} \tilde{I}_\alpha 
	\lambda_\alpha^\text{T} = \eta_\alpha$ can have a non-negative solution if 
	and only if $\alpha$ is a \textbf{non-trivially feasible} subset of 
	$(GH^{-1}G^\text{T})^\text{T}$.

	Thus, by Theorem \ref{thm:first_over-approx_theorem}, we conclude that 
	$\alpha \in \Xi$ implies $\alpha$ is a non-trivially feasible subset of $(G 
	H^{-1} G^\text{T})^\text{T}$, and hence, \emph{that the number of maximal 
	linear regions is bounded by the number of non-trivially feasible subsets 
	of} $G H^{-1}G^\text{T}$. The conclusion of the theorem thus follows 
	because every non-trivially feasible subset is a subset of some maximally 
	non-trivially feasible subset.
\end{proof}

\subsection{Implementing \texttt{EstimateRegionCount}} 
\label{sub:implementing_text}

To find maximal non-trivially feasible subsets of $GH^{-1}G^T$, we start by introducing one Boolean variable $b_i$ for each column of the matrix $GH^{-1}G^T$. The first non-trivially feasible subset can then be found by solving the following problem:
\begin{align}
	&\text{arg}\max_{(b_1,\ldots,b_\rho,\lambda)\in \mathbb{B}^\rho \times \mathbb{R}^{\omega}} \sum_{i = 1}^\rho b_i \label{eq:obj_B}\\
	&\text{subject to} \qquad
	b_i \Rightarrow [GH^{-1}G^T]_i \lambda < 0, \;\; i = 1,\ldots,\rho \label{eq:column_constraint}
\end{align}
where $[GH^{-1}G^T]_i$ denotes the $i$th column of the matrix $GH^{-1}G^T$. Such optimization problems can be solved efficiently using Satisfiability Modulo Convex programming (SMC) solvers~\cite{shoukry2018smc,shoukry2017smc}. SMC solvers first use a pseudo-Boolean Satisfiability (SAT) solver to find a valuation of the Boolean variables that maximizes the objective function~\eqref{eq:obj_B}; this is then followed by a linear programming (LP) solver that finds solutions to the constraints in~\eqref{eq:column_constraint}. Indeed, the SAT solver may return an assignment for the Boolean variables $b_1,\ldots,b_{\rho}$ for which the corresponding LP problem is infeasible. In such a case, we use the LP solver to search for a set of Irreducibly Infeasible Set (IIS) that explains the reason behind such infeasibility. This IIS is then encoded into a constraint that prevents the SAT solver from returning any assignment that can lead to the same IIS. We iterate between the SAT solver and the LP solver until one Boolean assignment is found for which the corresponding LP problem is feasible. It follows from maximizing the objective function that this set of active constraints is guaranteed to be a maximal non-trivially feasible subset of $GH^{-1}G^T$.

Once a maximal non-trivially feasible subset of $GH^{-1}G^T$ is found, we can add a blocking Boolean constraints to the SAT solver, thus preventing the SAT solver from producing any subsets of this maximal non-trivially feasible set. We continue this process until the SAT solver can not find any more feasible assignments to the Boolean variables, at which point our algorithm terminates and returns all of the non-trivially feasible subsets it has obtained. This discussion is summarized in Algorithm~\ref{alg:estimateRegionCount} whose correctness follows from the correctness of SMC solvers~\cite{shoukry2018smc,shoukry2017smc}.

\begin{algorithm}[!t]

\SetKwData{LinFnCnt}{N\_est}
\SetKwData{hyperplanes}{G\_Hinv\_Gtr}
\SetKwData{solnsFound}{AllSolutionsFound}
\SetKwData{numhyperplanes}{NumHyperplanes}
\SetKwData{h}{h}
\SetKwData{b}{b}
\SetKwData{false}{False}
\SetKwData{true}{True}
\SetKwData{sols}{Solutions}
\SetKwData{cons}{SATConstraints}
\SetKwData{setcons}{setConstraints}
\SetKwData{feas}{Feasible?}
\SetKwData{iis}{IIS}
\SetKwData{z}{z}
\SetKwData{hyperset}{HyperplaneSet}

\SetKwFunction{CntRegions}{EstimateRegionCount}
\SetKwFunction{GetHyperplanes}{GetHyperplanes}
\SetKwFunction{dim}{Dimensions}
\SetKwFunction{sat}{SATsolver}
\SetKwFunction{maxx}{Maximize}
\SetKwFunction{init}{init}
\SetKwFunction{initbools}{createBooleanVariables}
\SetKwFunction{append}{Append}
\SetKwFunction{satq}{SAT?}
\SetKwFunction{checkfeas}{CheckFeasibility}
\SetKwFunction{truevars}{TrueVars}
\SetKwFunction{selecthypers}{GetHyperplanes}
\SetKwFunction{getiis}{GetIIS}
\SetKwFunction{cntunique}{CountAllUniqueSubsets}

\SetKw{Break}{break}

\SetKwInOut{Input}{input}
\SetKwInOut{Output}{output}
\Input{system matrices $A$,$B$,$C$; cost matrices $P,Q \geq 0$, $R>0$; feedback matrix $K$; horizon $N_c$}
\Output{\LinFnCnt}
\BlankLine
\SetKwProg{Fn}{function}{}{end}%
\Fn{\CntRegions{A,B,C,P,Q,R,K,$\text{N}_\text{c}$}}{
	\hyperplanes $\leftarrow$ \GetHyperplanes{A,B,C,P,Q,R,K,$\text{N}_\text{c}$}

	\numhyperplanes $\leftarrow$ \dim{\hyperplanes}[0]

	\BlankLine
	$($ \h $\negthinspace[1], \dots ,$ \h $\negthinspace[\texttt{NumHyperplanes}] ~)$ $\leftarrow$ \hyperplanes

	$($ \b $\negthinspace[1], \dots ,$ \b $\negthinspace[\texttt{NumHyperplanes}] ~)$ $\leftarrow$ 	\initbools{\numhyperplanes}


	\sols $\leftarrow$ ( ) ; \cons $\leftarrow$ ( )


	\While{\true}{
		\sat.\setcons{\cons}

		\sat.\maxx{$\sum_{i=1}^{{\textnormal{\textsf{NumHyperplanes}}}}$ \b$\negthinspace[i]$}

		\If{not \sat.\satq{}}{
			\Break
		}


		\hyperset $\leftarrow$ \hyperplanes. 

		$\qquad\quad$\selecthypers{\sat.\truevars{}}

		\feas $\leftarrow$ \checkfeas{\hyperset\hspace{-2pt}*\z $\leq -\varepsilon$}

		\uIf{not \feas}{

			\iis $\leftarrow$ \getiis{\hyperset}

			\cons.\append{ $\bigvee_{\texttt{h}[i] \in {\textnormal{\textsf{IIS}}}} \neg {\texttt{b}}[i]$  }

		}\Else{
			\sols.\append{\hyperset}

			\cons.\append(

			$\quad\sum_{\texttt{h}[i] \in {\texttt{HyperplaneSet}}}\negthinspace{\texttt{b}}[i]\negthinspace<\negthinspace|\textnormal{\textsf{HyperplaneSet}}|\negthinspace\Rightarrow$

			$\qquad\quad \sum_{\texttt{h}[i] \not\in {\textnormal{\textsf{HyperplaneSet}}}}{\texttt{b}}[i]\geq 1$

			)
		}

	}

	\Return \LinFnCnt $\leftarrow$ \cntunique{\sols}
}
\caption{\texttt{EstimateRegionCount}.}
\label{alg:estimateRegionCount}
\end{algorithm}




\section{Approximating the Number of Unique-Order Regions in the MPC Controller} 
\label{sec:uo_approx}
In this section, we discuss our implementation of \texttt{EstimateUniqueOrder} 
from Algorithm \ref{alg:main_algorithm}. Unlike our implementation of 
\texttt{EstimateRegionCount}, which we could base of aspects of the MPC 
problem, the implementation of this function merely exploits a general bound on 
the number of possible regions in an arrangement of hyperplanes.

In particular, we noted in Definition \ref{def:order_region} that the 
unique-order regions created by a set of local linear functions $\mathcal{R} = 
\{\ell_1, \dots, \ell_N\}$ correspond to the regions in the hyperplane 
arrangement specified by \emph{non-empty} hyperplanes of the form $H_{ij} = \{ 
x : \ell_i(x) = \ell_j(x) \}$, each of which is a hyperplane in dimension of 
$n$ (when $x \in \mathbb{R}^n$).

There seems to be a well-known -- but rarely stated -- upper bound on the 
number of regions that can be formed by a hyperplane arrangement of $N$ 
hyperplanes in dimension $n$. The few places where it is stated (e.g 
\cite[Lemma 4]{SerraBoundingCountingLinear2018}) seem to ambiguously quote 
Zaslavsky's Theorem \cite[Thoerem 
2.5]{StanleyIntroductionHyperplaneArrangements} in their proofs. Thus, we state 
the bound, and sketch a proof.

\begin{theorem}
	\label{thm:main_uo_bound}
	Let $\mathcal{A}$ be an arrangement of $N$ hyperplanes in dimension $n$. 
	Then the number of regions created by this arrangement, $r(\mathcal{A})$ is 
	bounded by:
	\begin{equation}
	\label{eq:uo_bound}
		r(\mathcal{A}) \leq \sum_{i = 0}^{n} {N \choose i}
	\end{equation}
	(with equality if and only if $\mathcal{A}$ is in general position 
	\cite[pp. 4]{StanleyIntroductionHyperplaneArrangements}).
\end{theorem}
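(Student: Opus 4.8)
The plan is to induct on the total size $n + N$, building the arrangement one hyperplane at a time. The combinatorial heart is the standard \emph{restriction} (sweep) lemma: if $\mathcal{A}'$ is a hyperplane arrangement in $\mathbb{R}^n$ and $H$ is one further hyperplane with $H \notin \mathcal{A}'$, then
\[
    r(\mathcal{A}' \cup \{H\}) = r(\mathcal{A}') + r\big(\mathcal{A}'|_H\big),
\]
where $\mathcal{A}'|_H \triangleq \{\, H' \cap H : H' \in \mathcal{A}',\ H' \cap H \neq \emptyset \,\}$ is the induced arrangement inside the $(n-1)$-dimensional affine space $H$. I would justify this in one line: a region of $\mathcal{A}'$ is split by $H$ into two regions exactly when $H$ meets its relative interior, and the set of such regions is in bijection with the connected components of $H$ minus the traces $H' \cap H$, i.e.\ with the regions of $\mathcal{A}'|_H$.

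With that in hand, the base cases are uniform and trivial: for $N = 0$ the whole space is a single region and $\sum_{i=0}^{n}\binom{0}{i} = 1$, while for $n = 0$ a point is a single region and $\sum_{i=0}^{0}\binom{N}{i} = 1$. For the inductive step, write an $N$-hyperplane arrangement as $\mathcal{A} = \mathcal{A}' \cup \{H\}$ with $|\mathcal{A}'| = N - 1$ (if $H$ already lies in $\mathcal{A}'$, then $\mathcal{A}$ really has at most $N-1$ distinct hyperplanes and the bound for $N-1$ already suffices, since $k \mapsto \sum_{i=0}^{n}\binom{k}{i}$ is nondecreasing). The induced arrangement $\mathcal{A}'|_H$ lives in dimension $n-1$ and consists of \emph{at most} $N-1$ hyperplanes — fewer if two of the $H'$ cut $H$ in the same flat or are parallel to $H$. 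Applying the inductive hypothesis in dimension $n$ to $\mathcal{A}'$ and in dimension $n-1$ to $\mathcal{A}'|_H$ (using monotonicity to replace its true hyperplane count by $N-1$) gives
\[
    r(\mathcal{A}) \;\le\; \sum_{i=0}^{n}\binom{N-1}{i} \;+\; \sum_{i=0}^{n-1}\binom{N-1}{i}.
\]
Re-indexing the second sum as $\sum_{i=1}^{n}\binom{N-1}{i-1}$ and applying Pascal's rule $\binom{N}{i} = \binom{N-1}{i} + \binom{N-1}{i-1}$ term by term collapses the right-hand side to $\sum_{i=0}^{n}\binom{N}{i}$, which is the desired bound.

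For the equality clause, I would track where inequalities entered: only in (i) the two inductive bounds and (ii) the monotonicity step replacing the size of $\mathcal{A}'|_H$ by $N-1$. Equality at every stage of the construction therefore forces $H$ to meet each previously placed hyperplane in a distinct $(n-2)$-flat and forces the induced arrangement on $H$ to be in general position; unwinding this requirement over all $N$ hyperplanes is precisely the statement that $\mathcal{A}$ is in general position (\cite[pp.~4]{StanleyIntroductionHyperplaneArrangements}), and for the crisp ``if and only if'' I would simply defer to Zaslavsky's theorem as quoted there rather than re-derive it.

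The step I expect to require the most care is not any single calculation but the bookkeeping around ``at most $N-1$ hyperplanes'': one must check that \emph{every} degeneracy — coincident traces $H' \cap H$, hyperplanes parallel to $H$, or $H$ duplicating a member of $\mathcal{A}'$ — can only shrink the relevant counts, which is exactly why the bound function's monotonicity in the number of hyperplanes is invoked. The restriction lemma itself is classical but also deserves a careful (if brief) statement, since it is the one geometric input; the base cases and the Pascal telescoping are routine.
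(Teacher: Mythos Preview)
Your proposal is correct and follows essentially the same approach as the paper: both rely on the deletion--restriction identity $r(\mathcal{A}) = r(\mathcal{A}') + r(\mathcal{A}'')$ (what the paper calls the ``triple'' formula \cite[Lemma 2.1]{StanleyIntroductionHyperplaneArrangements}) together with induction, and both defer to Zaslavsky's theorem for the equality clause. Your single induction on $n+N$ with the explicit Pascal-rule telescoping is cleaner than the paper's sketched two-stage induction (first on $N$ in $n=2$, then on $n$), but the underlying mechanism is identical.
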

\begin{proof}
	First, we note that the bound holds with equality for arrangements in 
	\emph{general position} (defined on \cite[pp. 
	4]{StanleyIntroductionHyperplaneArrangements}); this is from 
	\cite[Proposition 2.4]{StanleyIntroductionHyperplaneArrangements}, a 
	consequence of Zaslavsky's theorem \cite[Theorem 
	2.5]{StanleyIntroductionHyperplaneArrangements}. Thus, the claim holds if 
	every other arrangement has fewer regions than an arrangement in general 
	position with the same number of hyperplanes.

	This is indeed the case, but it helps to have a little bit of terminology 
	first. In particular, we introduce the general formula for the number of 
	regions in a hyperplane arrangement, $r(\mathcal{A})$, in terms of a triple 
	of hyperplane arrangements $(\mathcal{A}, \mathcal{A}^\prime, 
	\mathcal{A}^{\prime\prime})$ \cite[pp. 
	13]{StanleyIntroductionHyperplaneArrangements}, namely \cite[Lemma 
	2.1]{StanleyIntroductionHyperplaneArrangements}:
	\begin{equation}
	\label{eq:hyper_triple_rel}
		r(\mathcal{A}) = r(\mathcal{A}^\prime) + r(\mathcal{A}^{\prime\prime}).
	\end{equation}
	Such a triple is formed by choosing a distinguished hyperplane $H_d \in 
	\mathcal{A}$, and defining $\mathcal{A}^\prime$ as 
	$\mathcal{A}\backslash\{H_d\}$ and $\mathcal{A}^{\prime\prime}$ as the 
	arrangement of hyperplanes $\{H \cap H_d \neq \emptyset : H \in 
	\mathcal{A}^\prime\}$. Note that $\mathcal{A}^{\prime\prime}$ characterizes 
	the regions in $\mathcal{A}^\prime$ that are \emph{split} by $H_d$.

	From here, we will only provide a brief proof sketch. The proof proceeds by 
	induction: first on the number of hyperplanes in $n=2$, and then on by 
	induction on the dimension, $n$. For $n=2$, the result can be shown for 
	arrangements of size $N$ using \eqref{eq:hyper_triple_rel}, and noting that 
	$r(\mathcal{A}^{\prime\prime}) = N$ if and only if $H_d$ intersects all the 
	other hyperplanes exactly once. This, together with the induction 
	assumption, shows $r(\mathcal{A})$ can satisfy the claim with equality only 
	if $\mathcal{A}$ is in general position. For $n > 2$, the proof proceeds 
	similarly, using \eqref{eq:hyper_triple_rel} to invoke the conclusion for 
	$n-1$ as necessary.
\end{proof}

Thus, our implementation of \texttt{EstimateUniqueOrderCount} simply computes 
and returns the value in \eqref{eq:uo_bound}. In the worst case, this estimate 
is $2^\text{\# of hyperplanes}$: this occurs for example when $N=n$. But for $N 
>> n$, this bound clearly grows more slowly than exponentially in $N$. This is 
extremely helpful in keeping the size of the second linear layer in Figure 
\ref{fig:overall_architecture} of a reasonable size.

We conclude this section by noting that the result in Theorem 
\ref{thm:main_uo_bound} may be used to state Theorem \ref{thm:embedding} 
independently of $M$ entirely. In particular, we have the following theorem.
\begin{theorem}
	\label{thm:uo_free_overapprox}
	Let $f$ be a CPWL function, and let $\overbar{N}$ be an upper-bound on the 
	number of local linear functions in $f$. Then for $M^\prime = \sum_{i = 
	0}^{n} {N \choose i}$ there exists a parameter list 
	$\Theta_{\overbar{N},M^\prime}$ with 
	$\textnormal{Arch}(\Theta_{\overbar{N},M^\prime})$ as in 
	\eqref{eq:nn_representation} such that:
	\begin{equation}
		f(x) = \nn_{\Theta_{\overbar{N},M^\prime}}(x) \quad \forall x.
	\end{equation}
\end{theorem}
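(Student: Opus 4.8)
The plan is to assemble the claim from three results already established: the constructive ReLU realization of a CPWL function (Theorem \ref{thm:nn_representation}, together with its vector-valued extension noted just before Corollary \ref{cor:controller_arch}), the combinatorial bound on the number of cells of a hyperplane arrangement (Theorem \ref{thm:main_uo_bound}), and the embedding of one such architecture into a larger one (Theorem \ref{thm:embedding}). The genuinely new observation is small: the number $M$ of unique-order regions of $f$ is itself bounded by a quantity depending only on $n$ and $\overbar{N}$, so $M$ can be eliminated from the parameter list in favour of the explicit constant $M'$, at which point Problem \ref{prob:main_problem}-style guarantees no longer require knowing $M$ (or even $N$) exactly.

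First I would fix notation: let $N$ be the number of distinct local linear functions of $f$ and $M$ its number of unique-order regions, both finite since $f$ is CPWL. Theorem \ref{thm:nn_representation} gives a parameter list $\Theta_{N,M}$ with architecture \eqref{eq:nn_representation} and $\nn_{\Theta_{N,M}}(x) = f(x)$ for all $x$. Next, to bound $M$: by Definition \ref{def:order_region} every unique-order region of $f$ is a cell of the arrangement $\mathcal{A}$ of the non-empty hyperplanes $H_{ij} = \{x : \ell_i(x) = \ell_j(x)\}$, $1 \le i < j \le N$; this arrangement lies in $\mathbb{R}^n$ and has at most $\binom{N}{2} \le \binom{\overbar{N}}{2}$ hyperplanes, so by monotonicity of the right-hand side of \eqref{eq:uo_bound} in the number of hyperplanes, Theorem \ref{thm:main_uo_bound} gives $M \le r(\mathcal{A}) \le \sum_{i=0}^{n} \binom{\binom{\overbar{N}}{2}}{i}$. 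This is the constant the statement calls $M'$ (the inner binomial seems to have been abbreviated in the statement; the essential point is that the bound depends only on $n$ and $\overbar{N}$, never on $f$). Finally, since $\overbar{N} \ge N$ and $M' \ge M$, Theorem \ref{thm:embedding} produces a parameter list $\Theta_{\overbar{N},M'}$ of the form \eqref{eq:nn_representation} with $\nn_{\Theta_{N,M}}(x) = \nn_{\Theta_{\overbar{N},M'}}(x)$ for all $x$; chaining the two equalities yields $f(x) = \nn_{\Theta_{\overbar{N},M'}}(x)$, as desired. If $f$ is vector-valued (as for $\mu_\text{MPC}$), the same argument runs with the vector-valued versions of Theorems \ref{thm:nn_representation} and \ref{thm:embedding}.

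I expect the two appeals to Theorems \ref{thm:nn_representation} and \ref{thm:embedding} to be essentially mechanical. The only step needing care is the region count: one must identify the correct arrangement — the one cut out by the pairwise differences $\ell_i - \ell_j$ of the local linear functions, not the facet hyperplanes of the linear regions of $f$ — and observe that coincident or empty $H_{ij}$ can only decrease the cell count, so that Theorem \ref{thm:main_uo_bound} (which holds with equality only in general position) is legitimately an upper bound here. Worth recording as well is that $M'$ is monotone in $\overbar{N}$, so any valid over-approximation $\overbar{N} \ge N$ may be substituted without harm; this is precisely what makes the bound usable inside \texttt{AReN}, where only $\overbar{N}$ (the output of \texttt{EstimateRegionCount}) is available and $M$ is never formed explicitly.
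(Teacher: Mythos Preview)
Your proposal is correct and matches the route the paper itself implies: the paper does not give a formal proof of Theorem \ref{thm:uo_free_overapprox}, but introduces it with the sentence ``the result in Theorem \ref{thm:main_uo_bound} may be used to state Theorem \ref{thm:embedding} independently of $M$ entirely,'' which is precisely the chain you spell out (Theorem \ref{thm:nn_representation} for the exact architecture, Theorem \ref{thm:main_uo_bound} to bound $M$, Theorem \ref{thm:embedding} to absorb the over-approximation). You are also right to flag the arrangement size: the hyperplanes $H_{ij}$ come in pairs of local linear functions, so the correct count to feed into \eqref{eq:uo_bound} is at most $\binom{N}{2}$ (hence $\binom{\overbar{N}}{2}$), not $N$; the paper's statement of $M'$ appears to elide this, and in any case writes $N$ where $\overbar{N}$ is meant.
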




\section{Discussion: Hinging Hyperplane Implementations} 
\label{sec:discussion}

For comparison, we will make some remarks about the hinging hyperplane 
representation used in \cite[Theorem 2.1]{AroraUnderstandingDeepNeural2016}. 
\begin{theorem}[Hinging Hyperplane Representation \cite{ShuningWangGeneralizationHingingHyperplanes2005}]
\label{thm:hinging_hyperplane_representation}
	Let $f : \mathbb{R}^n \rightarrow \mathbb{R}$ be a CPWL function. Then 
	there exists a finite integer $K$ and $K$ non-negative integers $\{\eta_k : 
	k = 1, \ldots, K \}$, each less than $n+1$, such that
	\begin{equation}
	\label{eq:hyperplane_form}
		f(x) = \sum_{k = 1}^K \sigma_k \max \left\{ 
			\mathcal{L}^{(k)}_1(x) , \mathcal{L}^{(k)}_2(x), \ldots, \mathcal{L}^{(k)}_{\eta_k}(x)
		\right\} \quad \forall x \in \mathbb{R}^n
	\end{equation}
	for some collection of $\mathcal{L}^{(k)}_i$, $i \in \{1, \ldots, 
	\eta_k\}$, each of which is an affine function of its argument $x$, and 
	each constant $\sigma_k \in \{-1, +1\}$. (Each $\mathcal{L}^{(k)}_i$ beyond 
	the first one is referred to as a ``hinge''. That is $\eta_k - 1$ is the 
	number of ``hinges'' in the $k^\text{th}$ summand.)
\end{theorem}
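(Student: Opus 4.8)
The plan is to read off the form \eqref{eq:hyperplane_form} from the two-level lattice representation (Theorem \ref{thm:lattice_form}) by an algebraic identity, and then — the real work — to sharpen the representation so that each $\max$ involves at most $n+1$ affine functions.

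For the first part I would start from the \emph{dual} of Theorem \ref{thm:lattice_form}, obtained by applying that theorem to $-f$ and negating the result:
\begin{equation}
f(x)=\min_{1\le i\le M'}\ \max_{j\in t_i}\ \ell_j(x)\qquad\forall x,
\end{equation}
where $\ell_1,\dots,\ell_N$ are affine (the local linear functions of $f$) and each $t_i\subseteq\{1,\dots,N\}$. I would then apply the elementary M\"obius-type identity, valid for any finite collection of reals $a_1,\dots,a_p$,
\begin{equation}
\min\{a_1,\dots,a_p\}=\sum_{\emptyset\ne S\subseteq\{1,\dots,p\}}(-1)^{|S|+1}\max_{k\in S}a_k,
\end{equation}
with $a_i=\max_{j\in t_i}\ell_j$. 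Using $\max_{i\in S}\max_{j\in t_i}\ell_j=\max_{j\in\bigcup_{i\in S}t_i}\ell_j$, this yields
\begin{equation}
f(x)=\sum_{\emptyset\ne S\subseteq\{1,\dots,M'\}}(-1)^{|S|+1}\ \max_{j\in\bigcup_{i\in S}t_i}\ \ell_j(x),
\end{equation}
which is exactly of the shape \eqref{eq:hyperplane_form}: relabel the nonempty $S$ as $k=1,\dots,K$, put $\sigma_k=(-1)^{|S|+1}\in\{-1,+1\}$, and let the affine functions inside the $k$-th $\max$ be $\{\ell_j:j\in\bigcup_{i\in S}t_i\}$. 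This part is essentially formal (one need only track which $S$ survive with a nonzero coefficient after merging repeated terms).

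What this does \emph{not} yet give is the bound ``$\eta_k<n+1$'': as produced above, $\eta_k$ may be as large as $N$. Establishing the dimensional bound is the substantive content of the theorem, and it is the step I expect to be the main obstacle. Such a bound is plausible because a $\max$ of $j$ affine functions on $\mathbb{R}^n$ can exhibit nondifferentiability only up to codimension $j-1$ — a genuine vertex requires $j=n+1$ — so one proves the bound by induction on the ambient dimension $n$, equivalently by working down the codimension of the strata of the nondifferentiability locus of $f$: the codimension-one folds of $f$ contribute two-term hinges, the more singular behaviour concentrated on the higher-codimension intersections of the fold hyperplanes contributes hinges with more affine arguments, and $n+1$ arguments always suffice because a $\max$ of $n+1$ affine functions already realises the deepest (codimension-$n$) singular structure available in $\mathbb{R}^n$. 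The delicate part — and the reason the naive manipulation above is not enough on its own — is that the fold pieces of $f$ do not in general extend to globally consistent hinge contributions over entire hyperplanes, so the decomposition must be organised stratum by stratum with careful bookkeeping of signs and overlaps; this is precisely the construction carried out in \cite{ShuningWangGeneralizationHingingHyperplanes2005}, to which I would defer for the remaining details.
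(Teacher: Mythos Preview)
The paper does not supply its own proof of this theorem: it is quoted from \cite{ShuningWangGeneralizationHingingHyperplanes2005} as background for the discussion in Section~\ref{sec:discussion}, and the paper explicitly points back to \cite[Theorem~1 and Corollary~3]{ShuningWangGeneralizationHingingHyperplanes2005} for the underlying construction. So there is no in-paper argument to compare yours against.

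That said, your sketch is sound as far as it goes. The inclusion--exclusion identity you use is correct (it is the order-statistics form of inclusion--exclusion, easily checked by induction on $p$), and combined with the dual lattice form it does yield a representation of the shape \eqref{eq:hyperplane_form}, only with each $\eta_k$ possibly as large as $N$. You correctly isolate the dimensional bound on $\eta_k$ as the substantive content and defer to \cite{ShuningWangGeneralizationHingingHyperplanes2005} for it --- which is exactly what the present paper does too. From the paper's own description of that reference, the original argument is an explicit construction driven by the actual local linear functions of $f$ (closer in spirit to the stratum-by-stratum induction you outline at the end) rather than an algebraic rewriting of the lattice form; your inclusion--exclusion step is a clean shortcut to the unbounded version, but it is not the route taken in the cited source, and it does not by itself reduce the number of arguments in each $\max$.
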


The problem with creating a NN architecture from Theorem 
\ref{thm:hinging_hyperplane_representation} is that its proof provides only an 
\emph{existential} assertion -- by means of explicit construction -- and that  
construction relies heavily on particular knowledge of the \emph{actual} local 
linear functions in the CPWL \cite[Theorem 
1]{ShuningWangGeneralizationHingingHyperplanes2005}. Thus, the number $K$ in 
\eqref{eq:hyperplane_form} -- which is essentially the only 
\emph{architectural} parameter in the ReLU -- has a complicated dependence on 
the particular CPWL function (see \cite[Corollary 
3]{ShuningWangGeneralizationHingingHyperplanes2005} as it appears in the proof 
of \cite[Theorem 1]{ShuningWangGeneralizationHingingHyperplanes2005}). This 
even makes it difficult to ``replay'' the proof of Theorem 
\ref{thm:hinging_hyperplane_representation} and upper-bound the size of the 
existential assertions in each step as necessary; there are naive upper bounds 
for each step, but they lead to an extravagant number of $\max$ operations 
(exponentially many, in fact).

Moreover, a further complication is that hinging hyperplane representations 
need not even be unique. For example, consider $f(x) = |x|$: for this function, 
$K = 1$ works because $|x| = \max\{ x, -x \}$. But the same function could also 
be implemented as 
\begin{equation}
	|x| = \max\{-x, 3 x + 4 \} - \max\{0 x , 4 x + 4\} + \max\{ 0 x, 2 x\}
\end{equation}
which has $K=3$ and \emph{five} different linear functions $\mathcal{L}$. 
However, this clearly suggests an alternate approach to upper-bounding the 
steps in Theorem \ref{thm:hinging_hyperplane_representation}: create an 
architecture derived from the CPWL with the largest \emph{minimal} 
hinging-hyperplane representation. We conjecture that such a maxi-min hinging 
hyperplane form would in fact lead to \emph{fewer} $\max$ units than the 
lattice representation used in AReN. Unfortunately, as far as we are aware, 
there exists no such minimal characterization in the literature (as a function 
of the number of local linear functions, say), so we leave consideration of 
this problem to future work.



\section{Numerical Results} 
\label{sec:numerical_results}
The function \texttt{EstimateRegionCount} (Algorithm 
\ref{alg:estimateRegionCount}) is the bottleneck in Algorithm 
\ref{alg:main_algorithm}. Therefore, we chose to benchmark Algorithm 
\ref{alg:estimateRegionCount} to gage the overall performance of our proposed 
framework. We implemented \texttt{EstimateRegionCount} using SAT solver Z3 and 
convex solver CPLEX using their respective Python interfaces. We tested our 
implementation on single-input,single-output MPC problems in two contexts: (1) 
with a varying number of states; and (2) with a varying prediction horizon 
$N_c$. The computer used had an Intel Core i7 2.9-GHz processor and 16 GB of 
memory.

Figure \ref{fig:experiment_1} (top) shows the performance of Algorithm 
\ref{alg:estimateRegionCount} as a function of the number of plant states, $n$, 
with all other parameters held constant. The estimated number of local linear 
functions N\_est output by \texttt{EstimateRegionCount} is plotted on one axis; 
the maximum number of linear functions needed, $2^\rho$, is also shown for 
reference. The other axis shows the execution time for each problem in seconds. 
It follows from Theorem~\ref{thm:second_over-approximation_theorem} that the 
number of plant states doesn't change the number of constraints and hence does 
not contribute to the complexity of Algorithm \ref{alg:estimateRegionCount}. 
Note that Algorithm \ref{alg:estimateRegionCount} reported a number of local 
linear functions that is one order of magnitude less than the maximum number of 
linear functions needed, $2^\rho$ while taking less than 1.5 minutes of 
execution time.  

Figure \ref{fig:experiment_1} (bottom) shows the performance of our algorithm (in semi-log scale) as a function of the number of constraints, $\rho$, with all other parameters held constant ($n = 100$). The estimated number of linear functions output by \texttt{EstimateRegionCount} is plotted on one axis; the maximum number of  linear functions needed, $2^\rho$, is also shown for reference. The other axis shows the execution time for each problem in seconds. Again,  we notice an order of magnitude difference between the reported number of local linear functions versus the maximum number of linear functions needed, $2^\rho$. Indeed, the execution time is affected by increasing the number of constraint, nevertheless, Algorithm  \ref{alg:estimateRegionCount} terminates in less than 1.5 hours for a system with more than 300,000 maximal linear regions. 

\begin{figure}
	\includegraphics[width=0.99\columnwidth]{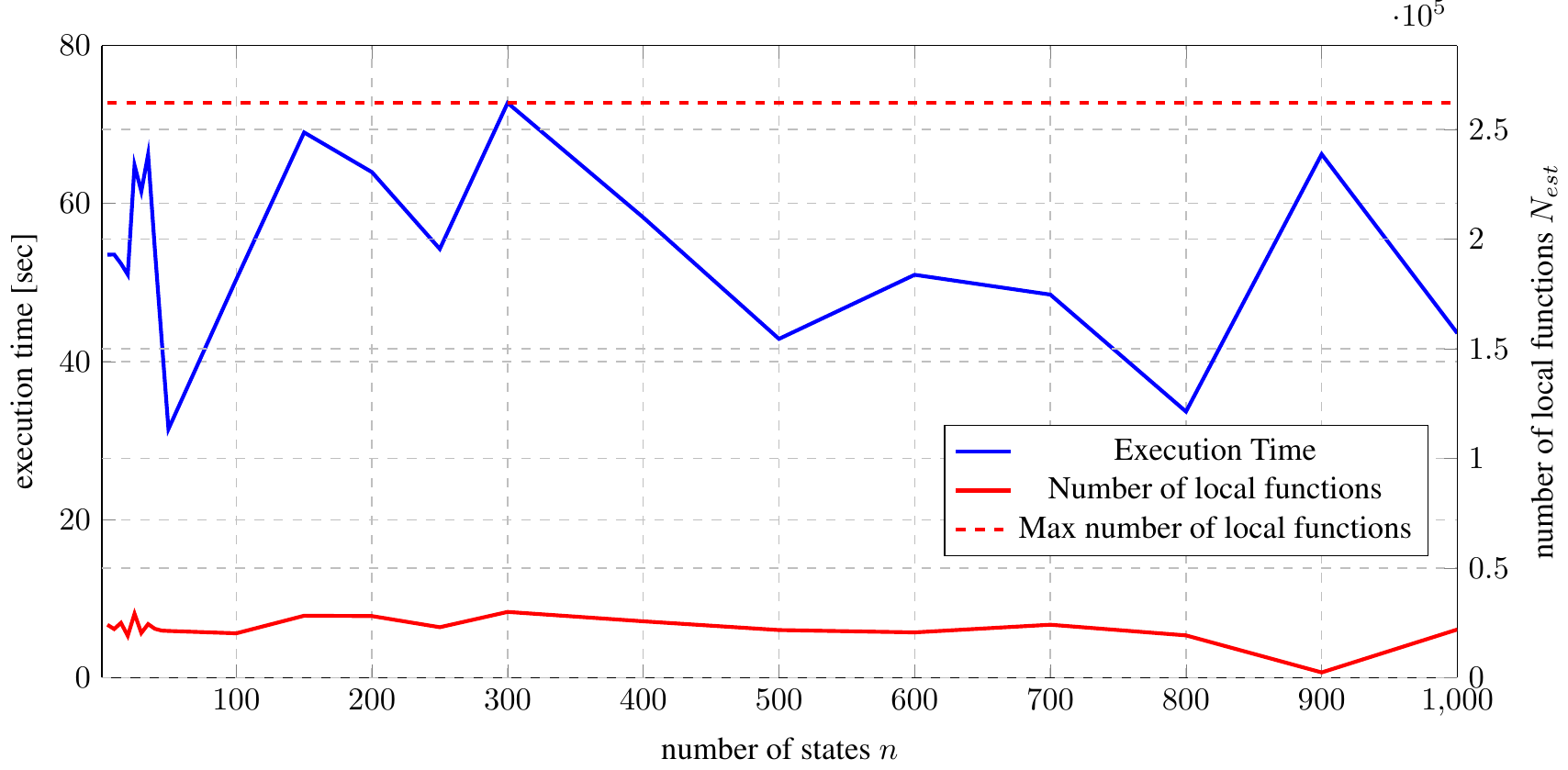} \\
	\includegraphics[width=0.99\columnwidth]{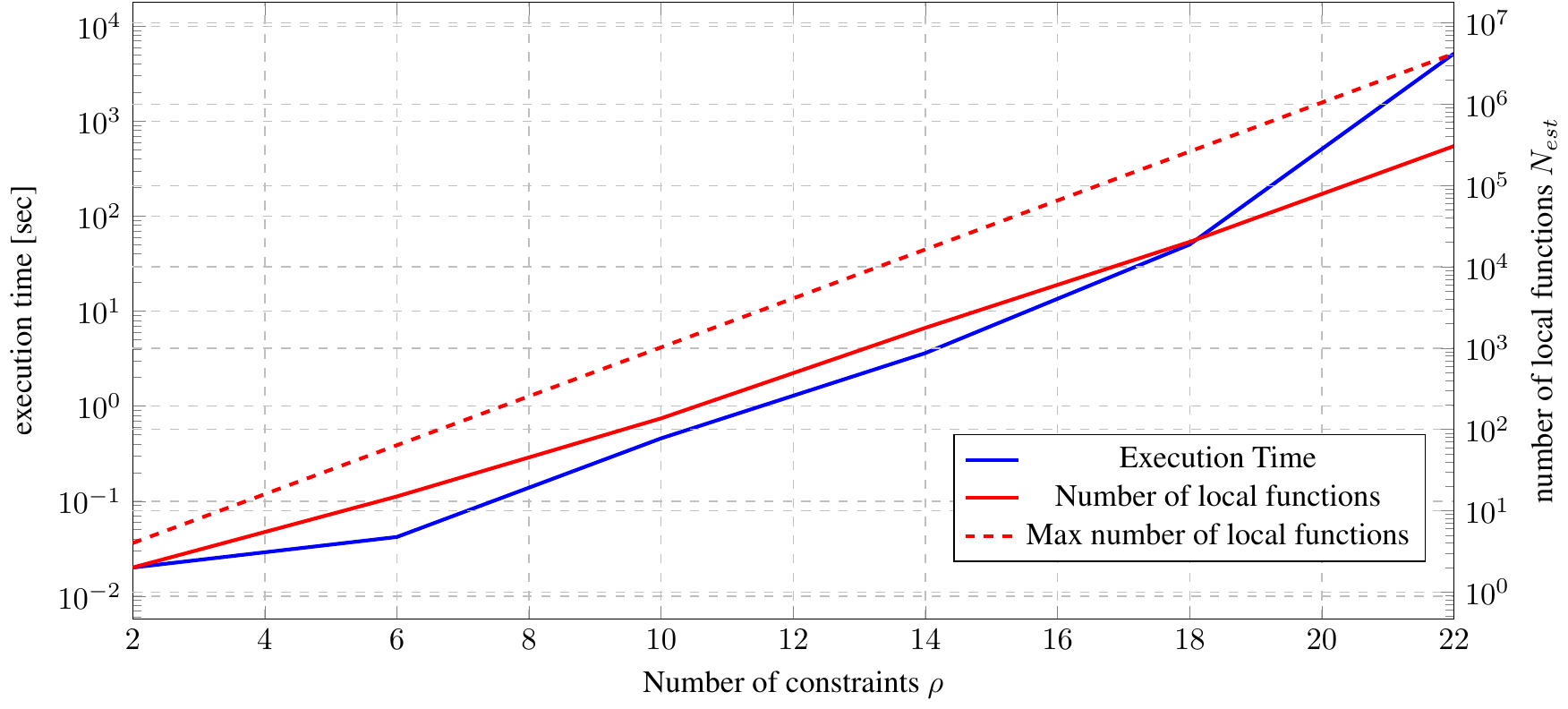}
	\caption{Execution time results: (top) when the number of states $n$ increases for a fixed number of constraints and (bottom) when the number of constraints $\rho$ increases for a fixed number of states $n = 100$.} \vspace{-5mm}
	\label{fig:experiment_1}
\end{figure}

%

\bibliographystyle{plain} %
\bibliography{mybib}

\begin{thebibliography}{10}

\bibitem{aakesson2006neural}
Bernt~M {A}kesson and Hannu~T Toivonen.
\newblock A neural network model predictive controller.
\newblock {\em Journal of Process Control}, 16(9):937--946, 2006.

\bibitem{amos2018differentiable}
Brandon Amos, Ivan Jimenez, Jacob Sacks, Byron Boots, and J~Zico Kolter.
\newblock Differentiable mpc for end-to-end planning and control.
\newblock In {\em Advances in Neural Information Processing Systems}, pages
  8289--8300, 2018.

\bibitem{AroraUnderstandingDeepNeural2016}
Raman Arora, Amitabh Basu, Poorya Mianjy, and Anirbit Mukherjee.
\newblock Understanding {{Deep Neural Networks}} with {{Rectified Linear
  Units}}.
\newblock 2016.

\bibitem{baker2016designing}
Bowen Baker, Otkrist Gupta, Nikhil Naik, and Ramesh Raskar.
\newblock Designing neural network architectures using reinforcement learning.
\newblock {\em arXiv:1611.02167}, 2016.

\bibitem{BemporadExplicitLinearQuadratic2002}
Alberto Bemporad, Manfred Morari, Vivek Dua, and Efstratios~N. Pistikopoulos.
\newblock The explicit linear quadratic regulator for constrained systems.
\newblock {\em Automatica}, 38(1):3--20, 2002.

\bibitem{bergstra2012random}
James Bergstra and Yoshua Bengio.
\newblock Random search for hyper-parameter optimization.
\newblock {\em Journal of Machine Learning Research}, 13(Feb):281--305, 2012.

\bibitem{bojarski2016end}
Mariusz Bojarski, Davide Del~Testa, Daniel Dworakowski, Bernhard Firner, Beat
  Flepp, Prasoon Goyal, Lawrence~D Jackel, Mathew Monfort, Urs Muller, Jiakai
  Zhang, et~al.
\newblock End to end learning for self-driving cars.
\newblock {\em arXiv:1604.07316}, 2016.

\bibitem{cavagnari1999neural}
L~Cavagnari, Lalo Magni, and Riccardo Scattolini.
\newblock Neural network implementation of nonlinear receding-horizon control.
\newblock {\em Neural computing \& applications}, 8(1):86--92, 1999.

\bibitem{ChenApproximatingExplicitModel2018}
S.~Chen, K.~Saulnier, N.~Atanasov, D.~D. Lee, V.~Kumar, G.~J. Pappas, and
  M.~Morari.
\newblock Approximating {{Explicit Model Predictive Control Using Constrained
  Neural Networks}}.
\newblock In {\em 2018 {{Annual American Control Conference}} ({{ACC}})}, pages
  1520--1527, 2018.

\bibitem{claviere2019trajectory}
Arthur Claviere, Souradeep Dutta, and Sriram Sankaranarayanan.
\newblock Trajectory tracking control for robotic vehicles using counterexample
  guided training of neural networks.
\newblock In {\em Proceedings of the International Conference on Automated
  Planning and Scheduling}, volume~29, pages 680--688, 2019.

\bibitem{hertneck2018learning}
Michael Hertneck, Johannes K{\"o}hler, Sebastian Trimpe, and Frank
  Allg{\"o}wer.
\newblock Learning an approximate model predictive controller with guarantees.
\newblock {\em IEEE Control Systems Letters}, 2(3):543--548, 2018.

\bibitem{KahlertGeneralizedCanonicalPiecewiselinear1990}
C.~Kahlert and L.~O. Chua.
\newblock A generalized canonical piecewise-linear representation.
\newblock {\em IEEE Transactions on Circuits and Systems}, 37(3):373--383,
  1990.

\bibitem{karg2018efficient}
Benjamin Karg and Sergio Lucia.
\newblock Efficient representation and approximation of model predictive
  control laws via deep learning.
\newblock {\em arXiv:1806.10644}, 2018.

\bibitem{LuenbergerOptimizationVectorSpace1997}
David~G. Luenberger.
\newblock {\em Optimization by {{Vector Space Methods}}}.
\newblock {John Wiley \& Sons}, 1997.

\bibitem{ortega1996mobile}
J~Gomez Ortega and EF~Camacho.
\newblock Mobile robot navigation in a partially structured static environment,
  using neural predictive control.
\newblock {\em Control Engineering Practice}, 4(12):1669--1679, 1996.

\bibitem{paul2019fast}
Supratik Paul, Vitaly Kurin, and Shimon Whiteson.
\newblock Fast efficient hyperparameter tuning for policy gradients.
\newblock {\em arXiv:1902.06583}, 2019.

\bibitem{pedregosa2016hyperparameter}
Fabian Pedregosa.
\newblock Hyperparameter optimization with approximate gradient.
\newblock {\em arXiv:1602.02355}, 2016.

\bibitem{pereira2018mpc}
Marcus Pereira, David~D Fan, Gabriel~Nakajima An, and Evangelos Theodorou.
\newblock Mpc-inspired neural network policies for sequential decision making.
\newblock {\em arXiv:1802.05803}, 2018.

\bibitem{quanming2018taking}
Yao Quanming, Wang Mengshuo, Jair~Escalante Hugo, Guyon Isabelle, Hu~Yi-Qi,
  Li~Yu-Feng, Tu~Wei-Wei, Yang Qiang, and Yu~Yang.
\newblock Taking human out of learning applications: A survey on automated
  machine learning.
\newblock {\em arXiv:1810.13306}, 2018.

\bibitem{SerraBoundingCountingLinear2018}
Thiago Serra, Christian Tjandraatmadja, and Srikumar Ramalingam.
\newblock Bounding and {{Counting Linear Regions}} of {{Deep Neural Networks}}.
\newblock {\em arXiv:1711.02114}, 2018.

\bibitem{shoukry2017smc}
Yasser Shoukry, Pierluigi Nuzzo, Alberto~L Sangiovanni-Vincentelli, Sanjit~A
  Seshia, George~J Pappas, and Paulo Tabuada.
\newblock {SMC: Satisfiability Modulo Convex} optimization.
\newblock In {\em Proceedings of the 20th International Conference on Hybrid
  Systems: Computation and Control (HSCC)}, pages 19--28. ACM, 2017.

\bibitem{shoukry2018smc}
Yasser Shoukry, Pierluigi Nuzzo, Alberto~L Sangiovanni-Vincentelli, Sanjit~A
  Seshia, George~J Pappas, and Paulo Tabuada.
\newblock Smc: Satisfiability modulo convex programming.
\newblock {\em Proceedings of the IEEE}, 106(9):1655--1679, 2018.

\bibitem{ShuningWangGeneralizationHingingHyperplanes2005}
{Shuning Wang} and {Xusheng Sun}.
\newblock Generalization of hinging hyperplanes.
\newblock {\em IEEE Transactions on Information Theory}, 51(12):4425--4431,
  2005.

\bibitem{StanleyIntroductionHyperplaneArrangements}
Richard~P Stanley.
\newblock An {{Introduction}} to {{Hyperplane Arrangements}}.
\newblock page~90.

\bibitem{TarelaRegionConfigurationsRealizability1999}
J.~M. Tarela and M.~V. Mart{\'\i}nez.
\newblock Region configurations for realizability of lattice
  {{Piecewise}}-{{Linear}} models.
\newblock {\em Mathematical and Computer Modeling}, 30(11):17--27, 1999.

\end{thebibliography}

\end{document}